\documentclass{article}

\PassOptionsToPackage{numbers, compress}{natbib}


\usepackage[final]{neurips_2025}



\usepackage[utf8]{inputenc} 
\usepackage[T1]{fontenc}    
\usepackage{hyperref}       
\usepackage{url}            
\usepackage{booktabs}       
\usepackage{amsfonts}       
\usepackage{nicefrac}       
\usepackage{microtype}      
\usepackage{xcolor}         

\usepackage{comment}
\usepackage{makecell}
\usepackage{tikz}
\usepackage{multicol}
\usepackage{multirow}
\usepackage{subcaption}
\usepackage{tipa}

\usepackage{amsmath}
\usepackage{amsthm}
\theoremstyle{plain}
\newtheorem{theorem}{Theorem}[section]
\newtheorem{proposition}[theorem]{Proposition}

\newtheorem{definition}[theorem]{Definition}

\theoremstyle{remark}
\newtheorem{remark}[theorem]{Remark}

\usepackage{algorithm}      
\usepackage{algpseudocode}  

\newcommand{\iid}[0]{\overset{\mathrm{iid}}{\sim}}
\newcommand{\norm}[1]{\left\lVert#1\right\rVert}
\newcommand{\sprod}[2]{\left\langle#1, #2\right\rangle}

\title{Structured Linear CDEs: Maximally Expressive and Parallel-in-Time Sequence Models}


%

\author{%
  Benjamin Walker$^{1}$,
  Lingyi Yang$^{1}$,
  Nicola Mu\c{c}a Cirone$^{2}$,
  Cristopher Salvi$^{2}$,
  Terry Lyons$^{1,2}$\\[4pt]
  $^{1}$Mathematical Institute, University of Oxford\\
  $^{2}$Department of Mathematics, Imperial College London
}

\begin{document}

\maketitle

\begin{abstract}
    This work introduces Structured Linear Controlled Differential Equations (SLiCEs), a unifying framework for sequence models with structured, input-dependent state-transition matrices that retain the maximal expressivity of dense matrices whilst being cheaper to compute.
    The framework encompasses existing architectures, such as input-dependent block-diagonal linear recurrent neural networks and DeltaNet's diagonal-plus-low-rank structure, as well as two novel variants based on sparsity and the Walsh--Hadamard transform. 
    We prove that, unlike the diagonal state-transition matrices of S4D and Mamba, SLiCEs employing block-diagonal, sparse, or Walsh--Hadamard matrices match the maximal expressivity of dense matrices. 
    Empirically, SLiCEs solve the $A_5$ state-tracking benchmark with a single layer, achieve best-in-class length generalisation on regular language tasks among parallel-in-time models, and match the performance of log neural controlled differential equations on six multivariate time-series classification datasets while cutting the average time per training step by a factor of twenty.
\end{abstract}

\section{Introduction}

Parallel-in-time architectures, such as Transformers and Structured State-Space Models (SSMs), have allowed language models to scale to billions of parameters~\cite{vaswani2017attention,gu2023mamba}. However, theory and practice agree that these models do not generalise to longer sequences on state-tracking problems, a task that classical Recurrent Neural Networks (RNNs) handle with ease \cite{merrill2023parallelismtradeofflimitationslogprecision,merrill2024illusion,liu2023transformerslearnshortcutsautomata}. Linear Neural Controlled Differential Equations (LNCDEs) are a continuous-time sequence model where the state-transition matrix, or vector field, depends linearly on the input path. This allows for the multiplicative interactions between the hidden state and the input path necessary for gating. Reframing SSMs as LNCDEs, it becomes clear that using a diagonal state-transition matrix severely restricts expressivity~\cite{cirone2024theoretical}. Replacing it with a dense matrix restores maximal expressivity and the ability to state-track, but increases the number of parameters and computational cost from $\mathcal{O}(d_h^{2})$ to $\mathcal{O}(d_h^{3})$, where $d_h$ is the hidden dimension \cite{merrill2024illusion,cirone2024theoretical}.

Structured alternatives seek the best of both worlds. Block-diagonal Linear RNNs (LRNNs) \cite{fan-etal-2024-advancing} and the Diagonal-Plus-Low-Rank (DPLR) structure of DeltaNet \cite{schlag2021linear,yang2025gateddeltanetworksimproving,siems2025deltaproductimprovingstatetrackinglinear} reduce computational cost while preserving some expressivity, with the latter recently shown to be maximally expressive~\cite{movahedi2025fixedpointrnnsdiagonaldense}.  
We generalise and extend these ideas with Structured Linear Controlled Differential Equations (SLiCEs), a unifying framework for structured, input-dependent state-transition matrices. SLiCEs replace the dense state-transition matrix of an LNCDE with an efficient structured variant, such as block‑diagonal, sparse, Walsh–Hadamard or DPLR, which maintain the maximal expressivity of dense matrices whilst reducing both the parameter count and computational cost.
\newpage
\paragraph{Contributions}
\begin{enumerate}
    \item Structured Linear Controlled Differential Equations (SLiCEs) are introduced as a common framework for models with structured, input-dependent state-transition matrices. SLiCEs incorporate SSMs such as Mamba \cite{gu2023mamba}, LRNNs such as DeltaNet \cite{schlag2021linear} and input-dependent block-diagonal LRNN \cite{fan-etal-2024-advancing}, LNCDEs \cite{cirone2024theoretical}, and two novel structures based on sparse matrices and the Walsh--Hadamard transform.
    \item Block-diagonal, sparse, and Walsh--Hadamard SLiCEs are proven to achieve maximal probabilistic expressivity in Theorems \ref{thm:bdlncde}, \ref{thm:slncde}, and \ref{thm:whlncde}, respectively. Previously, such expressivity had only been shown for dense and DPLR matrices \cite{cirone2024theoretical, movahedi2025fixedpointrnnsdiagonaldense}.
    \item A comprehensive empirical evaluation showing that the structure of the state-transition matrix significantly impacts length generalisation on state-tracking problems. The block-diagonal structure emerges as a promising option, due to its strong empirical results and favourable parallelisation. Furthermore, on six real-world multivariate time-series classification datasets, a block-diagonal SLiCE is shown to match the predictive accuracy of Log-NCDEs, whilst reducing the per-step training time by a factor of twenty. 
    \item Open-source implementations of SLiCEs in both PyTorch and JAX, along with code to fully reproduce all experiments from this paper. 
    These are available at \url{https://github.com/Benjamin-Walker/structured-linear-cdes} (PyTorch) and \url{https://github.com/Benjamin-Walker/log-neural-cdes} (JAX).
    \end{enumerate}

\paragraph{Related work}

Increasing the expressivity of parallel-in-time sequence models while retaining their computational efficiency is of significant interest, as it would facilitate training large, performant models. One approach parallelises non-linear RNNs by rewriting them as fixed-point problems and applying parallel Newton or quasi-Newton methods to calculate their output \cite{lim2024parallelizingnonlinearsequentialmodels, gonzalez2024scalable}. However, reported wall-clock gains remain limited; parallel autoregressive generation can be up to twice as slow as sequential baselines~\cite{gonzalez2024scalable}. There have also been a large number of input-dependent LRNN architectures proposed, including input-dependent block-diagonal LRNN \cite{fan-etal-2024-advancing}, DeltaNet \cite{yang2024parallelizing}, DeltaProduct \cite{siems2025deltaproductimprovingstatetrackinglinear}, Gated DeltaNet \cite{yang2025gateddeltanetworksimproving}, Mamba \cite{gu2023mamba}, Mamba-2 \cite{Dao2024Transformers}, RWKV-7 \cite{peng2025rwkv7gooseexpressivedynamic}, HGRN-2~\cite{Qin2024HGRN2}, mLSTM \cite{beck2024xlstm}, Gated Linear Attention \cite{yang2024gatedlinearattentiontransformers}, Gated Random Feature Attention \cite{peng2021random}, Gated Slot Attention~\cite{zhang2024gated}, TTT-Linear \cite{sun2025learninglearntesttime}, and Titans \cite{behrouz2024titanslearningmemorizetest}. These models use either diagonal or DPLR state-transition matrices. Table 2 in \cite{yang2024parallelizing} presents a comparison of the architectures for a number of these models.

Utilising structured matrices to reduce the computational burden of neural networks extends beyond sequence models. The lottery‑ticket hypothesis~\cite{frankle2019lotterytickethypothesisfinding} argues that dense networks contain sparse sub‑networks that, when trained in isolation, can match the accuracy of the full model. Such sub‑networks have been uncovered by pruning before~\cite{tanaka2020pruning}, during~\cite{strom1997sparse}, and after training~\cite{He_2017_ICCV}. SLiCEs differ from pruning by imposing structured sparsity at initialisation and training the resulting sparse model directly. Other structured‑matrix approaches include sparse Transformers \cite{Child2019GeneratingLS}, 2:4 sparsity in linear layers \cite{Mishra2021AcceleratingSD}, and Monarch layers, which factorise weight matrices into two block‑diagonal components \cite{Dao2022MonarchES}.

\section{Background}

\subsection{Linear controlled differential equations}
\label{sec:lcde}

Let $\omega:[0,T]\rightarrow \mathbb{R}^{d_\omega}$ be a path with bounded-variation, where $\omega_s$ denotes the value of the path at time $s\in[0,T]$. A linear Controlled Differential Equation (CDE) takes the form
\begin{equation}
\label{eq:lin_cde}
    \mathrm{d}h_s = \sum_{i=1}^{d_\omega}A^ih_s\text{d}\omega^i_s,
\end{equation}
where $A^i\in \mathbb{R}^{d_h \times d_h}$ is the linear vector field for each channel $i=1,\ldots,d_{\omega}$, and $h:[0,T]\rightarrow \mathbb{R}^{d_h}$ is the solution path. Approximating $\omega_s$ with linear interpolation on the grid $0=t_0<\cdots<t_n=T$ yields
\begin{equation}
\label{eq:lin_cde_logode_1}
    \tilde{h}_{t_{j+1}} = \exp\left(\sum_{i=1}^{d_\omega}(\omega^i_{t_{j+1}}-\omega^i_{t_j})A^i\right)\tilde{h}_{t_j}.
\end{equation}
The outputs $\tilde{h}_{t_j}$ are computable in $\mathcal{O}(\log(n))$ parallel steps by composing the flow on each interval using an associative parallel scan \cite{blelloch1990prefix}. This approach has been used to parallelise linear RNNs \citep{martin2018parallelizinglinearrecurrentneural} and SSMs \citep{S5}.

On each interval $[t_j,t_{j+1}]$, \eqref{eq:lin_cde_logode_1} uses only the increments of $\omega$, providing a first-order approximation. The Log-ODE method extends this to higher orders by combining the iterated Lie brackets of the vector field with the log-signature of $\omega$~\cite{CASTELL199513}. See \citet[Section 3.2.2]{cass2024lecture} for a summary description of the algorithm and Appendix~\ref{app:log_lncde} for a description of applying the algorithm to an LNCDE. 

\subsection{Linear neural controlled differential equations}
\label{sec:LNCDEs}

Let $\{(t_i,x_i)\}_{i=0}^n$ denote a set of observations from a multivariate time-series and $X:[t_0,t_n]\rightarrow \mathbb{R}^{d_X}$ be a continuous interpolation, such that $X_{t_i}=(t_i, x_i)$. NCDEs are defined as
\begin{equation}
\label{eq:ncde}
    h_{t_0} = \xi_{\phi}(t_0, x_0), \quad  
    h_t = h_{t_0} + \int_{t_0}^t g_{\theta}(h_s)\text{d} X_s, 
    \quad z_t=l_{\psi}(h_t),
\end{equation}
where $\xi_{\phi}:\mathbb{R}^{d_X}\rightarrow\mathbb{R}^{d_h}$ and $g_{\theta}:\mathbb{R}^{d_h}\rightarrow\mathbb{R}^{d_h \times d_X}$ are neural networks, and $l_{\psi}:\mathbb{R}^{d_h}\rightarrow\mathbb{R}^{d_z}$ is a linear map \citep{kidger2020neuralcde}. NCDEs have a number of desirable properties, including maximal expressivity and robustness to irregular sampling rates. Building on the work of Neural Rough Differential Equations \citep{morrill2021neural}, Log-NCDEs \citep{Walker2024LogNCDE} demonstrate that combining NCDEs with the Log-ODE method during training leads to state-of-the-art performance on a range of multivariate time-series modelling benchmarks with up to $50{,}000$ observations.

LNCDEs take the form
\begin{equation}
   \label{eq:lncde} 
    h_t = h_{t_0} + \int_{t_0}^t \sum_{i=1}^{d_{\omega}} A^i_{\theta}h_s\text{d} \omega^{X,i}_s = h_{t_0} + \int_{t_0}^t \left(\sum_{i=1}^{d_{\omega}} A^i_{\theta}\text{d} \omega^{X,i}_s\right)h_s,
\end{equation}
where $\omega^X:[t_0,t_n]\rightarrow\mathbb{R}^{d_{\omega}}$ is a path which depends on the input $X$ and the $A_{\theta}^i$ are trainable matrices. As will be discussed further in Section \ref{sec:expressive_path}, LNCDEs are maximally expressive \cite{kidger2020neuralcde, cirone2024theoretical}. Therefore, there exists a maximally expressive sequence model whose recurrence can be calculated parallel-in-time using the approach outlined in Section \ref{sec:lcde}. However, the number of parameters and computational cost makes this approach infeasible in large models. Independently of LNCDEs, \citet{merrill2024illusion} proposed IDS4, a modification of the S4 layer designed to allow state-tracking, which has the same form as \eqref{eq:lncde}. Appendix~\ref{app:intro} provides a more detailed introduction to LNCDEs by comparing and contrasting them to SSMs and LRNNs. Additionally, Appendix~\ref{app:intro} contains a toy example demonstrating how the structure of the matrices $A^i_{\theta}$ affects model expressivity, a discussion of how to extend LNCDEs to matrix-valued hidden states, and a pseudo-code implementation.

\section{Expressivity}

\subsection{Introduction}

Expressivity characterises the set of functions a model can approximate, and maximal expressivity (or universal approximation) guarantees that, with suitable parameters, any continuous function on a compact set can be approximated arbitrarily closely.

\begin{definition}[Maximal expressivity]
\label{def:universal_approximation_general}
Let $\mathcal{X}$ be a topological space, and let 
$\mathcal{F} = \{ f_\theta : \mathcal{X} \to \mathbb{R} \mid \theta \in \Theta \}$
be a class of real-valued functions on $\mathcal{X}$, parametrised by some set $\Theta$. 
We say that $\mathcal{F}$ is maximally expressive (or universal) if, for every compact set $\mathcal{K} \subset \mathcal{X}$ and every real-valued continuous function $f : \mathcal{K} \to \mathbb{R}$, the following property holds:
\begin{equation}
\forall \epsilon > 0, \; \exists \theta \in \Theta \quad \text{s.t.} \quad 
\sup_{x \in \mathcal{K}} \big| f(x) - f_\theta(x) \big| < \epsilon.
\end{equation}
\end{definition}
A classical result is the Universal Approximation Theorem, which states that for $\mathcal{X} = \mathbb{R}^d$, single-hidden-layer feed-forward networks with a suitable activation function are maximally expressive~\citep{cybenko1989approximation, hornik1991approximation}.

\subsection{Maximally expressive models on paths}
\label{sec:expressive_path}

Let $\mathcal{X}$ denote the space of continuous paths of bounded variation on the interval $[0,T]$ that start at the same point and contain time as a channel (time-augmented). We endow this space with the $1-$variation topology. Let $\mathcal{F}$ be the class of LNCDEs defined in \eqref{eq:lncde} with a linear readout layer $l_{\theta_2}$, such that $f_{\theta}:\mathcal{X}\rightarrow \mathbb{R}$ is defined by 
\begin{equation}
    \omega \mapsto l_{\theta_2}(h_{t_n}) = l_{\theta_2}\left(h_{t_0} + \int_{t_0}^{t_n} \sum_{i=1}^{d_{\omega}} A^i_{\theta_1}h_s\text{d} \omega^{i}_s\right),
\end{equation}
for $\omega \in \mathcal{X}$. In this setting, $\mathcal{F}$ is maximally expressive \cite{kidger2022neuraldifferentialequations}. Furthermore, LNCDEs with diagonal matrices $A^i_{\theta_1}$, such as S4D~\cite{Gu2022On} and Mamba~\cite{gu2023mamba}, are not maximally expressive \cite{cirone2024theoretical}. An alternative to maximal expressivity is the following probabilistic property. 
\begin{definition}[Maximal Probabilistic Expressivity]
    Let $\mathcal{X}$ be a topological space, $N\in\mathbb{N}$, and $\mathcal{F}^N = \{ f^N_\theta : \mathcal{X} \to \mathbb{R} \mid \theta \in \Theta^N\}$ be a class of real-valued functions on $\mathcal{X}$ defined by
    \begin{equation}
        f^N_{\theta}(\omega) = l_{\theta_2}(\tilde{f}^N_{\theta_1}(\omega)),
    \end{equation}
    for $\omega\in\mathcal{X}$, where $\tilde{f}^N_{\theta_1}:\mathcal{X}\rightarrow\mathbb{R}^N$, $l_{\theta_2}\in\mathbb{R}^N$ is a linear readout, $\theta_1\in\Theta_1^N$, $\theta_2\in\Theta_2^N$, and $\Theta^N = \Theta_1^N\cup \Theta_2^N$. Given a sequence of probability measures $\mathbb{P}_{N}$ on $\Theta_1^N$ with $\theta_1 \sim \mathbb{P}_{N}$, $\mathcal{F}$ has maximal probabilistic expressivity if, for every compact set $\mathcal{K} \subset \mathcal{X}$ and every real-valued continuous function $f : \mathcal{K} \to \mathbb{R}$, the following property holds:
    \begin{equation}
        \forall \epsilon>0, \quad \lim\limits_{N \to \infty}\mathbb{P}_{N}\Bigg\{
    \exists l_{\theta_2} ~
    \text{s.t.} 
    \sup_{\omega \in \mathcal{K}}\Big|f(\omega) - f_{\theta}^N(\omega)\Big| < \epsilon
    \Bigg\} = 1.
    \end{equation}
\end{definition}

In the context of machine learning, maximal probabilistic expressivity may be considered a more promising property than maximal expressivity, as for large enough $N$, it implies there exists a significant abundance of parameters $\theta_1$ that are capable of achieving uniformly bounded and arbitrarily low error rates with a linear readout layer. This suggests the parameters should be readily discoverable through standard optimisation methods.

In the case of LNCDEs, $N=d_h$, 
\begin{equation}
    \tilde{f}^{d_h}_{\theta_1}(\omega) = h_{t_0} + \int_{t_0}^{t_n} \sum_{i=1}^{d_{\omega}} A^i_{\theta_1}h_s\text{d} \omega^{i}_s,
\end{equation}
and $\mathbb{P}_{d_h}$ on $\Theta_1^{d_h}$ is a collection of probabilities on matrices $\mathbb{P}_{d_h}^i$ with $A^i_{\theta_1}\sim\mathbb{P}_{d_h}^i$. Achieving maximal probabilistic expressivity depends crucially on the choice of $\mathbb{P}^i_{d_h}$. Building on the work of \citet{cuchiero2021expressive}, \citet[Theorem B.13]{cirone2024theoretical} showed that choosing the $A^i_{\theta}$ to be dense Gaussian matrices with independent entries achieves maximal probabilistic expressivity. Unfortunately, using dense matrices is infeasible in practice due to computational constraints, as discussed in Section \ref{sec:LNCDEs}.

\section{Structured linear controlled differential equations}

\label{sec:method}

\subsection{Introduction}

Building on \citet{cirone2024theoretical}, we introduce Structured Linear Controlled Differential Equations (SLiCEs), a unifying framework for various choices of structured $A^i_{\theta}$. This section presents three examples based on current models in the literature, diagonal, DPLR, and block-diagonal matrices, as well as two novel examples based on sparse matrices and the Walsh--Hadamard transform. Here, we focus on the recurrent core, with further implementation details given in Appendix \ref{app:impl_det1}.

Our main theoretical results are Theorems \ref{thm:bdlncde}, \ref{thm:slncde}, and \ref{thm:whlncde}, which show that SLiCEs with block-diagonal, sparse, and Walsh--Hadamard matrices have maximal probabilistic expressivity. Formal proofs are given in Appendix \ref{app:expressivity}. 
These results complement \cite[Theorem 4.3]{cirone2024theoretical}, which shows that SLiCEs with diagonal matrices are not maximally expressive and \cite[Proposition F.2]{movahedi2025fixedpointrnnsdiagonaldense}, which shows that SLiCEs with DPLR matrices have maximal probabilistic expressivity. 

\subsection{Diagonal-plus-low-rank SLiCEs} \label{sec:dplr_lncde}

SSMs with diagonal state-transition matrices, such as S4D~\cite{Gu2022On} and Mamba~\cite{gu2023mamba}, are examples of SLiCEs with diagonal matrices, $A^i_{\theta}=D^i_{\theta}$. Hence, they are not maximally expressive and underperform on state-tracking benchmarks \cite{cirone2024theoretical, merrill2024illusion}. This limited expressivity motivates the use of alternative structured state-transition matrices.

DeltaNet, DeltaProduct, and Gated DeltaNet use specific versions of DPLR state-transition matrices~\cite{schlag2021linear, yang2024parallelizing, siems2025deltaproductimprovingstatetrackinglinear, yang2025gateddeltanetworksimproving}. The general form of a DPLR-SLiCE is
\begin{equation}
    A^i_{\theta} = D^i_{\theta} + \sum_{j=1}^r u^{i,j}_{\theta}(v^{i,j}_{\theta})^\top,
\end{equation}
where $r$ is the rank. This parameterisation reduces the number of trainable parameters and computational cost of calculating a hidden state update from $\mathcal{O}(d_{\omega}d_h^2)$ to $\mathcal{O}(d_{\omega}rd_h)$. Furthermore, \cite[Proposition D.2]{movahedi2025fixedpointrnnsdiagonaldense} shows that if $r\rightarrow \infty$ as $d_h\rightarrow \infty$, then DPLR-SLiCEs have maximal probabilistic expressivity. 

\subsection{Block-Diagonal SLiCEs}\label{sec:bd_lncde}

Block-diagonal state-transition matrices were first explored in LRNNs to improve performance on regular-language tasks \citep{fan-etal-2024-advancing}. Block-diagonal SLiCEs (BD-SLiCEs) use the same structure, but make the dependence on the input path linear, 
\begin{equation}
A^i_{\theta} = \mathrm{BlockDiag}(B^i_{\theta, 1}, B^i_{\theta, 2}, \dots, B^i_{\theta, k}),
\end{equation}
where each $B^i_{\theta, j} \in \mathbb{R}^{b_j \times b_j}$ is a trainable dense block, $k$ is the number of blocks, and $b_j$ are the block-sizes, with $d_h = \sum_{j=1}^kb_j$. This parameterisation reduces the number of trainable parameters and computational cost of calculating a hidden state-update from $\mathcal{O}(d_{\omega}d_h^2)$ to $\mathcal{O}(d_{\omega}\sum_{j=1}^kb_j^2)$, providing a substantial speed-up when each $b_j\ll d_h$. Furthermore, this does not restrict the expressivity.
\begin{theorem}
    \label{thm:bdlncde}
    If $\max_jb_j\rightarrow\infty$ as $d_h\rightarrow\infty$, then block-diagonal SLiCEs have maximal probabilistic expressivity.
\end{theorem}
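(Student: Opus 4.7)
The plan is to reduce the block-diagonal case to the known dense case (\cite[Theorem B.13]{cirone2024theoretical}) by exploiting the decoupling inherent in block-diagonal dynamics. The key observation is that a block-diagonal $A^i_\theta$ makes the hidden state split into independent sub-LNCDEs, one per block, so the sub-LNCDE associated with the largest block is \emph{itself} a dense LNCDE of dimension $\max_j b_j$. If the distribution $\mathbb{P}_{d_h}^i$ is chosen so that each block $B^i_{\theta,j}$ has the same independent Gaussian entries as in the dense theorem, then the largest-block sub-LNCDE inherits maximal probabilistic expressivity from \cite[Theorem B.13]{cirone2024theoretical} as $\max_j b_j \to \infty$.

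Concretely, I would write the hidden state as $h_s = (h^{(1)}_s,\ldots,h^{(k)}_s)$ with $h^{(j)}_s \in \mathbb{R}^{b_j}$, and observe that \eqref{eq:lncde} decouples into
\begin{equation}
    dh^{(j)}_s = \sum_{i=1}^{d_\omega} B^i_{\theta,j}\, h^{(j)}_s\, d\omega^i_s, \qquad j=1,\ldots,k.
\end{equation}
Let $j^\star = \arg\max_j b_j$. The sub-process $h^{(j^\star)}$ is exactly the solution of a dense LNCDE of dimension $b_{j^\star}$ driven by $\omega$, with the dense Gaussian distribution on $B^i_{\theta,j^\star}$ induced by restricting $\mathbb{P}^i_{d_h}$. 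Since the linear readout $l_{\theta_2} \in \mathbb{R}^{d_h}$ is chosen after $\theta_1$, I would set all components of $l_{\theta_2}$ outside the $j^\star$ block to zero, so that $l_{\theta_2}(h_{t_n}) = \tilde{l}(h^{(j^\star)}_{t_n})$ for a free linear readout $\tilde{l} \in \mathbb{R}^{b_{j^\star}}$. Applying \cite[Theorem B.13]{cirone2024theoretical} to the $j^\star$-block sub-LNCDE then gives, for every compact $\mathcal{K} \subset \mathcal{X}$ and every continuous $f : \mathcal{K} \to \mathbb{R}$,
\begin{equation}
    \lim_{b_{j^\star} \to \infty} \mathbb{P}\Bigl\{ \exists\, \tilde{l} \text{ s.t. } \sup_{\omega \in \mathcal{K}} \bigl| f(\omega) - \tilde{l}(h^{(j^\star)}_{t_n}) \bigr| < \epsilon \Bigr\} = 1,
\end{equation}
and the hypothesis $\max_j b_j \to \infty$ as $d_h \to \infty$ translates this into the required limit in $d_h$.

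The main obstacle, and the only subtlety beyond bookkeeping, is to ensure that the hypotheses of the dense theorem are genuinely satisfied by the $j^\star$ sub-block. Two points need care: first, the initial condition restricted to the $j^\star$-block must be admissible for \cite[Theorem B.13]{cirone2024theoretical} (non-degenerate, independent of the block entries), which is immediate if $h_{t_0}$ is fixed or produced by an independent encoder as in \eqref{eq:ncde}; and second, the probability measure on $B^i_{\theta,j^\star}$ induced by $\mathbb{P}^i_{d_h}$ must match the scaling and independence conditions used in the dense theorem. Formalising this amounts to specifying $\mathbb{P}^i_{d_h}$ as the product over blocks of the dense Gaussian law at the appropriate scale, after which the reduction is essentially automatic. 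Expressivity on the other blocks is irrelevant, since the readout projects them away.
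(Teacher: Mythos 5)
Your proposal is correct, but it takes a genuinely different route from the paper. The paper's proof (Appendix~\ref{app:expressivity_bd}) stays inside the moment-verification framework of \citet[Section B.3.5]{cirone2024theoretical}: it decomposes the normalised Gram quantity as an average over blocks, $\frac{1}{N}\sprod{A_Ih_0}{A_Jh_0} = \frac{1}{k_N}\sum_l \frac{1}{b_N}\sprod{B^I_lh_{0;l}}{B^J_lh_{0;l}}$, applies the dense-Gaussian concentration bound $\mathcal{O}(1/\sqrt{b_N})$ to each block, and concludes by the triangle inequality; it works with a uniform block size $b_N = \lceil \log N\rceil$ so that every block grows. You instead exploit the decoupling of the dynamics, zero out the readout off the largest block, and invoke the dense result of \citet[Theorem B.13]{cirone2024theoretical} as a black box on the $b_{j^\star}$-dimensional sub-LNCDE; the monotonicity argument (the restricted-readout event is contained in the full existence event) is valid, and the induced law on the largest block does match the dense theorem's hypotheses under the natural product-of-blocks Gaussian measure. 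Your argument is shorter and, notably, covers heterogeneous block-size sequences in which only $\max_j b_j$ grows --- which is what the theorem actually states, and which includes the diagonal-dense variant (all $b_i=1$ except one) used in the experiments, a case the appendix proposition's uniform-block construction does not literally address. What the paper's approach buys in exchange is uniformity with the sparse and Walsh--Hadamard proofs (all three reduce to the same Gram bound), the fact that the \emph{entire} hidden state, not just one block, carries the approximation property, and the possibility of a sharper rate from averaging over $k_N$ independent blocks, though the paper does not exploit that.
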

Hence, the non-linear dependence of the input-dependent block-diagonal LRNN is not necessary for theoretical expressivity. Because the hidden state factorises into $k$ independent parts, BD-SLiCEs can be viewed as a multi‑head dense LNCDE (DE-LNCDE) of head sizes $b_j$. For a fixed $d_h$, choosing smaller blocks yields greater speed; choosing larger blocks yields greater expressivity. Under a fixed compute budget, there is a trade-off between expressivity and hidden dimension, and this is explored empirically in Appendix \ref{app:fla}.

\subsection{Sparse SLiCEs}\label{sec:sparse_lncde}

Let $0 < \epsilon < 1$. A sparse SLiCE (S-SLiCE) takes each $A^i_{\theta}$ to be a sparse matrix with $\mathcal{O}(d_h^{1 + \epsilon})$ non-zero entries, selected at random according to a Bernoulli distribution. This reduces the parameter count and computational cost of calculating a hidden state update from $\mathcal{O}(d_{\omega}d_h^2)$ to $\mathcal{O}(d_{\omega}d_h^{1 + \epsilon})$. Furthermore, it does not restrict the expressivity.


\begin{theorem}
\label{thm:slncde}
    Sparse SLiCEs have maximal probabilistic expressivity.
\end{theorem}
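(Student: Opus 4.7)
The plan is to follow the strategy used for dense Gaussian matrices in \cite[Theorem B.13]{cirone2024theoretical}, replacing its algebraic genericity argument with a combinatorial one tailored to the random support. First I would expand the solution of \eqref{eq:lncde} as a Chen--Fliess / signature series,
\begin{equation*}
h_{t_n} = \sum_{k\geq 0}\sum_{i_1,\dots,i_k} A^{i_1}_{\theta}\cdots A^{i_k}_{\theta}\, h_{t_0}\int_{t_0<s_1<\cdots<s_k<t_n} d\omega^{i_1}_{s_1}\cdots d\omega^{i_k}_{s_k},
\end{equation*}
so the linear readout acts on the truncated signature of $\omega$ through the coefficients $l_{\theta_2}^\top A^{i_1}_{\theta}\cdots A^{i_k}_{\theta} h_{t_0}$. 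A standard separation-of-points argument for signatures of time-augmented paths of bounded variation reduces maximal probabilistic expressivity, at each truncation level, to showing that with probability tending to one the $\{A^i_\theta\}$ generate a subalgebra of $\mathfrak{gl}(d_h)$ whose action on $h_{t_0}$ fills an arbitrary prescribed finite-dimensional target subspace as $d_h\to\infty$.

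The key probabilistic step is a lemma of independent interest: if each $A^i_\theta$ has $\Theta(d_h^{1+\epsilon})$ non-zero entries placed i.i.d.\ Bernoulli and filled with i.i.d.\ continuous (e.g.\ Gaussian) values, then the unital algebra generated by $(A^1_\theta,\dots,A^{d_\omega}_\theta)$ equals $\mathfrak{gl}(d_h)$ with probability $\to 1$. I would prove this in two stages. First, view the union of supports as the edge set of a random directed multigraph on $d_h$ vertices; since $\epsilon>0$ puts the edge count well above the strong-connectivity threshold $d_h\log d_h$, the graph is strongly connected with diameter $O(\log d_h)$ with high probability, so for every pair $(p,q)$ there exists a directed word of bounded length realising a nonzero path from $p$ to $q$. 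Second, conditional on the support, use absolute continuity of the continuous fills to rule out algebraic cancellations along these words, so that every elementary matrix $E_{pq}$ lies in the generated algebra and the algebra is therefore full.

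With the algebra full, maximal probabilistic expressivity follows exactly as in the dense case: a Stone--Weierstrass argument on the algebra of continuous functionals of signatures, applied on compact sets of paths in the $1$-variation topology, yields the required uniform approximation. The main obstacle I anticipate is the second stage. The joint randomness of support and entry values forces a careful union bound over the $O(d_h^2)$ target elementary matrices while keeping the effective word lengths (growing like $\log d_h$) under control, and one must verify that the resulting polynomial-non-vanishing arguments remain quantitative uniformly in $d_h$. I expect this to go through because each relevant word evaluates to a non-degenerate polynomial in independent continuous random variables whose zero set has measure zero, combined with a Borel--Cantelli-style bound on the failure probabilities summed over word pairs.
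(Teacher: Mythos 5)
Your reduction in the second paragraph is where the argument breaks down. Writing $v^\top h_t = \sum_I (v^\top A_I h_0)\, S^I(\omega)_{[0,t]}$, the readout does not act on a truncated signature: it produces an \emph{infinite} series whose coefficient at every word is pinned down once $v$ is chosen. Showing that the unital algebra generated by the $A^i_\theta$ is all of $\mathfrak{gl}(d_h)$ (even with probability tending to one) only lets you solve the finite linear system $v^\top A_I h_0 = \alpha_I$ for the words in the support of the target $\alpha$; it says nothing about the infinitely many residual coefficients $v^\top A_J h_0$ at the remaining words, whose total contribution $\sum_{J}(v^\top A_J h_0)\, S^J(\omega)$ must be shown to be uniformly small on the compact set. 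Controlling that tail requires a bound on $\norm{v}$, and the minimal-norm solution of the finite system has norm governed by the smallest singular value of the Gram matrix $\bigl(\frac{1}{N}\sprod{A_I h_0}{A_J h_0}\bigr)_{I,J}$. So once you try to make your argument quantitative you are forced to prove precisely the estimate that the paper uses as its \emph{only} ingredient, namely
\begin{equation*}
\norm{\tfrac{1}{N}\sprod{A_I h_0}{A_J h_0}_{\mathbb{R}^N} - \delta_{I,J}}_{L^2(\mathbb{P}_N)} \leq (\kappa(|I|+|J|))!!\;\mathcal{O}(N^{-1/2}),
\end{equation*}
with the double-factorial dependence on word length needed so that the tail can be summed against the $1/k!$ decay of signature coefficients of bounded-variation paths. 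The paper obtains this by observing, via the graph-expansion moment calculus of \cite{cirone2024graphexpansionsdeepneural}, that Bernoulli-masked Gaussian matrices with $Np_N\to\infty$ satisfy the same pairing asymptotics as dense Gaussians up to a correction of order $\mathcal{O}_{I,J}(N^{-1/2})$; no statement about the generated algebra is needed or used.

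Your strong-connectivity lemma is plausible and may well be true, but fullness of the generated algebra is neither necessary nor sufficient here: it is a qualitative statement at fixed $d_h$, whereas maximal probabilistic expressivity is an asymptotic, quantitative statement requiring uniform control over all word lengths simultaneously. The difficulty you flag (non-vanishing of polynomials along words, Borel--Cantelli over word pairs) is not the real obstruction; the obstructions are the conditioning of the word-feature Gram matrix and the summability of the neglected tail of the signature series, and the proposal as written contains no mechanism for either.
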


In theory, S-SLiCEs have faster training and inference times than DE-LNCDEs. In practice, current deep-learning frameworks (e.g.\ JAX~\cite{jax2018github}, PyTorch~\cite{paszke2019pytorchimperativestylehighperformance}) are not optimised for unstructured sparsity, so practical speed-ups are not observed in our implementations. Nonetheless, we anticipate that ongoing work on sparse matrices will enable future gains in efficiency.

\subsection{Walsh--Hadamard SLiCEs}\label{sec:wh_lncde}

A Hadamard matrix of order $n$ is an $n \times n$ matrix $H_n$ with entries $\pm 1$ whose rows (and columns) are mutually orthogonal, $H_n H_n^\top = n I_n$, where $I_n$ is the $n \times n$ identity matrix. When $n = 2^m$ for $m \in \mathbb{N}$, we can construct these matrices iteratively using the Sylvester construction \cite{sylvester1867thoughts}.
Commonly, these matrices are applied via the Walsh--Hadamard transform (WHT), which admits an $\mathcal{O}(n \log n)$ algorithm \cite{shanks1969computation}. 
Many scientific computing libraries include efficient CPU and GPU kernels for performing the Walsh--Hadamard transform \cite{2020SciPy-NMeth, agarwal2024hadacore}. In practice, a normalisation factor of $n^{-1/2}$ can be applied to ensure the matrix is orthonormal \citep{torlai2023quantum}.

Walsh--Hadamard SLiCEs (WH-SLiCEs) replace each dense matrix $A^i_{\theta}$ by the product
\begin{equation}
    A^i_{\theta} = H D^i_{\theta},
\end{equation}
where $H$ is a normalised Hadamard matrix, and $D^i_{\theta}$ is a diagonal matrix. This parameterisation reduces the number of trainable parameters from $\mathcal{O}(d_\omega d_h^2)$ to $\mathcal{O}(d_{\omega} d_h)$. Summing the diagonal matrices across the channels and then applying the fast Walsh--Hadamard transform, the computational cost is $\mathcal{O}(\max(d_\omega d_h, d_h\log(d_h))$. This is substantially cheaper than the $\mathcal{O}(d_\omega d_h^2)$ for dense LNCDEs. Furthermore, this modification does not restrict the expressivity.

\begin{theorem}
    \label{thm:whlncde}
    Walsh--Hadamard SLiCEs have maximal probabilistic expressivity.
\end{theorem}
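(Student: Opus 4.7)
My plan is to follow the strategy of \cite[Theorem B.13]{cirone2024theoretical} for dense Gaussian matrices and adapt each step to the Walsh--Hadamard setting. First, I would invoke the universality of the path-signature for time-augmented bounded-variation paths in the $1$-variation topology: for any compact $\mathcal{K}\subset\mathcal{X}$, any continuous $f:\mathcal{K}\to\mathbb{R}$, and any $\epsilon>0$, there exist a truncation level $M\in\mathbb{N}$ and coefficients $(c_w)$ indexed by words $w=i_1\cdots i_k$ over $\{1,\dots,d_\omega\}$ with
\begin{equation*}
\sup_{\omega\in\mathcal{K}}\Big|f(\omega)-\sum_{|w|\leq M}c_w\,\pi_w\bigl(S(\omega)\bigr)\Big|<\tfrac{\epsilon}{2},
\end{equation*}
where $\pi_w$ extracts the $w$-th signature coefficient. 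Substituting the Chen/Picard expansion $h_{t_n}=\sum_{w} A^w h_{t_0}\,\pi_w(S(\omega))$, with $A^w=A^{i_1}\cdots A^{i_k}$, and truncating at level $M$ with a uniform error bound on $\mathcal{K}$ (standard for bounded variation), reduces the task to finding a linear readout $l_{\theta_2}\in\mathbb{R}^{d_h}$ satisfying $l_{\theta_2}(A^w h_{t_0})=c_w$ for every $|w|\leq M$. Such a readout exists whenever the collection $\{A^w h_{t_0}\}_{|w|\leq M}$ is linearly independent in $\mathbb{R}^{d_h}$, which is dimensionally feasible once $d_h\geq\sum_{k=0}^M d_\omega^k$.

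The core probabilistic statement is therefore that this linear independence holds with probability tending to $1$ as $d_h\to\infty$. Writing out
\begin{equation*}
\bigl(A^w h_{t_0}\bigr)_\alpha = \sum_{j_1,\dots,j_k} H_{\alpha j_1}\, d^{i_1}_{j_1}\, H_{j_1 j_2}\, d^{i_2}_{j_2}\cdots H_{j_{k-1}j_k}\, d^{i_k}_{j_k}\,(h_{t_0})_{j_k},
\end{equation*}
each component is a polynomial of degree $|w|$ in the independent, absolutely continuous entries of the $D^i$, with coefficients determined entirely by Hadamard entries along index paths. A linear dependence among the $\{A^w h_{t_0}\}_{|w|\leq M}$ corresponds to the vanishing of a Gram-type determinant, which is itself a polynomial in the random entries. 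By a standard Schwartz--Zippel / absolute-continuity argument, the event of linear dependence has Lebesgue measure zero provided this polynomial is not identically zero, so it suffices to exhibit a single parameter choice making it nonzero.

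The main obstacle is establishing this non-vanishing in the presence of the highly structured matrix $H$, where the available randomness is only $d_\omega\cdot d_h$ rather than the $d_\omega\cdot d_h^2$ available in the dense case. I plan to proceed by induction on $M$, assigning to each word $w$ a distinguished leading monomial in the $D^i$ entries whose coefficient factors through a product of Hadamard entries indexed by a \emph{unique} sequence $(j_1,\dots,j_k)$. The orthogonality relation $HH^\top=I$, together with the Sylvester self-similar sign pattern, is crucial: any systematic cancellation between leading monomials of distinct words would amount to a non-trivial linear relation among rows of $H$, contradicting orthogonality. Once the determinant is shown to be a nonzero polynomial, absolute continuity of the $D^i$ entries implies it vanishes with probability zero, and taking $d_h\to\infty$ with $M$ fixed yields linear independence with probability one. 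Combining this with signature approximation and the truncation bound gives maximal probabilistic expressivity. The delicate part is managing the combinatorics of simultaneous leading-monomial selection across all words of length $\leq M$; I anticipate that a lexicographic ordering on index paths, exploiting the recursive block structure of Sylvester Hadamards, will furnish the required disjointness.
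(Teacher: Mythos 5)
There is a genuine gap in your strategy. Your reduction to ``find a readout $l$ with $l(A^w h_{t_0})=c_w$ for all $|w|\le M$, which exists once the $\{A^w h_{t_0}\}_{|w|\le M}$ are linearly independent'' ignores the fact that the LNCDE output is the \emph{full} signature series $l(h_{t_n})=\sum_{\text{all }w} l(A^w h_{t_0})\,\pi_w(S(\omega))$, not its truncation at level $M$. Even if your interpolation conditions are met exactly, the error contains the tail $\sum_{|w|>M} l(A^w h_{t_0})\,\pi_w(S(\omega))$, and a purely qualitative linear-independence argument (Schwartz--Zippel on a Gram determinant) gives no control over it: generic position says nothing about the conditioning of the Gram matrix, so the interpolating readout $l$ may have arbitrarily large norm, and the products $A^w$ grow geometrically in $|w|$. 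To make the argument close, you need \emph{quantitative, uniform-over-words} control of the feature geometry, and this is precisely what the paper's proof supplies. It shows
\begin{equation*}
\Big\lVert \tfrac{1}{N}\langle A_I h_0, A_J h_0\rangle - \delta_{I,J}\Big\rVert_{L^2(\mathbb{P}_N)} \le \bigl(\kappa(|I|+|J|)\bigr)!!\;\mathcal{O}\bigl(N^{-1/2}\bigr),
\end{equation*}
i.e.\ asymptotic orthonormality of the whole (infinite) family $\{A_I h_0/\sqrt{N}\}$ with explicitly word-dependent constants; plugged into the machinery of \citet[Section B.3.5]{cirone2024theoretical}, this simultaneously yields a bounded readout (essentially $v=\tfrac1N\sum_{|w|\le M}c_w A^w h_0$), accurate interpolation of the low-order coefficients, and vanishing of the high-order tail. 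Your proposal contains no analogue of this estimate.

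The technical core you would still have to supply is also harder than you anticipate. The paper does not argue via leading monomials or non-vanishing of a determinant; it computes second moments of the Gram entries via the graph-pairing expansion of \cite{cirone2024graphexpansionsdeepneural}, using $HH^\top=NI_N$ to collapse subgraphs around unpaired vertices and the uniform entrywise bound $|[H]_{\alpha\beta}|\le C$ to control all non-identity pairings, with the $(\cdot)!!$ factor counting pairings. Your observation that the randomness is only $\mathcal{O}(d_\omega d_h)$ rather than $\mathcal{O}(d_\omega d_h^2)$ correctly identifies where the difficulty lies, but the resolution is moment combinatorics plus orthogonality of $H$, not a lexicographic leading-monomial selection. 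If you want to salvage your route, you would at minimum need to replace ``linearly independent with probability one'' by a high-probability lower bound on the smallest singular value of the (normalised) feature matrix together with a summable bound on $\lVert A^w h_0\rVert/\sqrt{N}$ over all words $w$ --- at which point you have essentially reconstructed the paper's estimate.
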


\subsection{Parallel computation}\label{sec:parallel}

The recurrent cost of a SLiCE is based solely on the cost of calculating a single hidden state update, whereas the calculation when using an associative scan is repeatedly composing the flow
\begin{equation}
    \label{eq:composee}
    \exp\left(\sum_{i=1}^{d_\omega}(\omega^i_{t_{j+1}}-\omega^i_{t_j})A^i_{\theta}\right) \approx I + \sum_{i=1}^{d_\omega}(\omega^i_{t_{j+1}}-\omega^i_{t_j})A^i_{\theta},
\end{equation}
where the first-order approximation of the exponential is sometimes used in practice. When the $A^i_{\theta}$ are diagonal or block-diagonal, the composition of \eqref{eq:composee} preserves the structure, as these classes of matrices are closed under multiplication. Therefore, using a parallel associative scan reduces the scan depth from $n$ to $\log(n)$, whilst having a computational cost per composition of $\mathcal{O}(d_h)$ or $\mathcal{O}(d_h\sum_{j}b_j^2)$, respectively. However, for DPLR, sparse, and WH SLiCEs, the structured matrices are not closed under multiplication, which means that the limiting computational cost per composition is the same as a DE-LNCDE, $\mathcal{O}(d_h^3)$. Table \ref{tab:comparison} summarises the differences in parameter count, computational cost, existence of an efficient implementation, and expressivity of all the SLiCEs considered in this paper, where for simplicity we have taken $d_{\omega}=d_h$.
\\ \\
For large models, parallel associative scans result in high I/O costs, as each state-transition matrix must be materialised in GPU memory~\cite{yang2024parallelizing}. 
A possible approach to mitigating I/O costs for SLiCEs is combining them with the Log-ODE method. By approximating the solution over intervals, this method avoids explicitly materialising intermediate state-transition matrices. However, it does require computing the log-signature of the input path and iterated Lie brackets of the vector fields~\citep{Walker2024LogNCDE}.  A detailed description of this approach is given in Appendix~\ref{app:log_lncde} and Table \ref{tab:comparison} quantifies the impact of the Log-ODE method on computational cost. In Section \ref{sec:uea}, we implement a hybrid strategy: the Log-ODE method is applied to small intervals, and the resulting outputs are then processed using a parallel associative scan. \citet{yang2024parallelizing} introduced an alternative approach for DeltaNet, where a chunk-wise algorithm specifically tailored for diagonal-plus-rank-one state-transition matrices is used to bypass the need to materialise every intermediate matrix, significantly cutting down I/O costs~\citep{yang2024parallelizing}. Independently, \citet{cirone2025parallelflow} and \citet{siems2025deltaproductimprovingstatetrackinglinear} extended this approach to higher rank matrices. These approaches can also be applied to diagonal state-transition matrices. Therefore, a block-diagonal SLiCE with a large diagonal portion ($b_i=1$ for $i=1,\dots,k-1$) followed by a small dense block emerges as an attractive solution. The large diagonal section can efficiently utilise the chunk-wise algorithm and the smaller dense section can be processed using parallel associative scans without incurring significant I/O costs. We refer to this structure as diagonal-dense SLiCE (D-DE-SLiCE).

\begin{table*}
    \caption{\textbf{Comparison of SLiCEs on parameter count, computational cost, the existence of an efficient implementation, and expressivity.} Here, $d_{h}$ is the hidden dimension, $n$ is the sequence length, $b_j$ are BD's block-sizes, $r$ is DPLR's rank, $\epsilon$ is S's sparsity, and for simplicity we have taken $d_{\omega}=d_h$. Parallel cost is measured as $\mathcal{O}($scan depth$,$ cost per composition$)$ when applying a parallel associative scan. Log-X-SLiCE corresponds to applying the Log-ODE method with fixed-size intervals containing $s$ samples and a truncation depth of $N$, where X is a specific SLiCE structure with $\mathcal{O}(P_X)$ parameters, $\mathcal{O}(R_X)$ recurrent cost, and $\mathcal{O}(C_X)$ cost per composition.}
    \label{tab:comparison}
    \small
    \centering
    \begin{tabular}{lccccc}
    \toprule
    \textbf{Model} & \textbf{Parameters} & \textbf{Recurrent Cost} & \textbf{Parallel Cost} & \makecell{\textbf{Efficient} \\ \textbf{Impl.}} & \makecell{\textbf{Maximally} \\ \textbf{Expressive}} \\
    \midrule
    DE-LNCDEs 
      & $\mathcal{O}(d_{h}^{3})$
      & $\mathcal{O}(nd_{h}^{3})$
      & $\mathcal{O}(\log(n), d_{h}^3)$
      & Yes
      & Yes \\[3pt]
    D-SLiCEs
      & $\mathcal{O}(d_{h}^2)$
      & $\mathcal{O}(nd_{h}^2)$
      & $\mathcal{O}(\log(n), d_{h}^2)$
      & Yes
      & No \\[3pt]
    DPLR-SLiCEs
      & $\mathcal{O}(rd_{h}^2)$
      & $\mathcal{O}(nrd_{h}^2)$
      & $\mathcal{O}(\log(n), d_{h}^3)$
      & Yes
      & Yes \\[3pt]
    S--SLiCEs 
      & $\mathcal{O}(d_{h}^{2+\epsilon})$
      & $\mathcal{O}(nd_{h}^{2+\epsilon})$
      & $\mathcal{O}(\log(n), d_{h}^3)$
      & No
      & Yes \\[3pt]
    WH--SLiCEs 
      & $\mathcal{O}(d^2_{h})$
      & $\mathcal{O}(nd^2_{h})$
      & $\mathcal{O}(\log(n), d_{h}^3)$
      & Yes
      & Yes \\[3pt]
    BD--SLiCEs 
      & $\mathcal{O}\left(d_h\sum_jb_j^2\right)$
      & $\mathcal{O}\left(nd_h\sum_jb_j^2\right)$
      & $\mathcal{O}\left(\log(n), d_h\sum_jb_j^2\right)$
      & Yes
      & Yes \\ [3pt]
    Log--X-SLiCEs
      & $\mathcal{O}(P_X)$
      & $\mathcal{O}\left(\frac{R_x}{s}d_h^{N-1}\right)$
      & $\mathcal{O}\left(\log\left(\frac{n}{s}\right), C_Xd_h^{N-1}\right)$
      & -
      & - \\
    \bottomrule
    \end{tabular}
    \vspace{-2mm}
\end{table*}

\section{Experiments}

\label{sec:exp}

\subsection{The $A_5$ benchmark}

The $A_5$ benchmark tests models on their ability to state-track \cite{merrill2024illusion}. Each sequence in the dataset consists of a series of permutations from the group of even permutations on five elements, denoted $A_5$. The task is to compose the permutations, which requires state-tracking. Following \citet{merrill2024illusion}, we train and evaluate models on sequences ranging from length $3$ to $20$ and determine how many stacked layers  each model needs to achieve a validation accuracy greater than $90\%$.

This benchmark serves as an empirical validation of our theoretical results; D-SLiCEs are less expressive than DPLR, sparse, WH, and BD SLiCEs. In addition to the SLiCEs, we consider Mamba~\cite{gu2023mamba}, LSTM~\cite{LSTM}, gated DeltaProduct with negative eigenvalues~\cite{yang2025gateddeltanetworksimproving, siems2025deltaproductimprovingstatetrackinglinear}, and the two components of xLSTM~\cite{beck2024xlstm} (mLSTM and sLSTM) on this benchmark. All baselines use a hidden dimension of $1024$ and all SLiCEs use $1024$ parameters per state-transition matrix. Full experimental details are given in Appendix \ref{app:A5}. 

Figure~\ref{fig:a5} shows that the diagonal state-transition matrices of Mamba, mLSTM, and D-SLiCE mean that an increasing number of stacked layers are needed as the sequence length grows. Interestingly, Gated DeltaProduct with negative eigenvalues, which uses a DPLR structure, and the D-DE-SLiCE also need a growing number of stacked layers. However, DPLR and BD SLiCE both need one layer for all sequence lengths, suggesting this is not an inherent limitation of theses structures. Similarly, sparse, Walsh--Hadamard, and dense SLiCEs, as well as the two recurrent baselines LSTM and sLSTM, all need only one layer for all sequence lengths.

To assess length generalisation, we select the models that achieve at least $90\%$ validation accuracy on sequences of length $20$ and retrain them on sequences ranging from $3$ to $40$. Early stopping is performed using a validation set with sequence lengths from $40$ to $128$. The mLSTM is excluded because it requires fixed-length inputs. Figure~\ref{fig:a5_length_gen} reports test accuracy for lengths from $20$ to $5120$. The recurrent LSTM and sLSTM generalise well, maintaining high test accuracy beyond both the training and validation ranges. Among the parallel-in-time models, three patterns emerge: (i) WH-SLiCE and Mamba do not attain high accuracy even at training lengths; (ii) DeltaProduct and D-DE-SLiCE generalise to approximately $2\times$ the training length but not beyond the validation range; and (iii) DE-LNCDE, DPLR-SLiCE, S-SLiCE, and BD-SLiCE sustain high accuracy on sequences at least $8\times$ the training length, exceeding the maximum validation length.

\begin{figure*}[t]
    \centering
    \begin{subfigure}{0.65\linewidth}
        \centering
        \includegraphics[width=\linewidth]{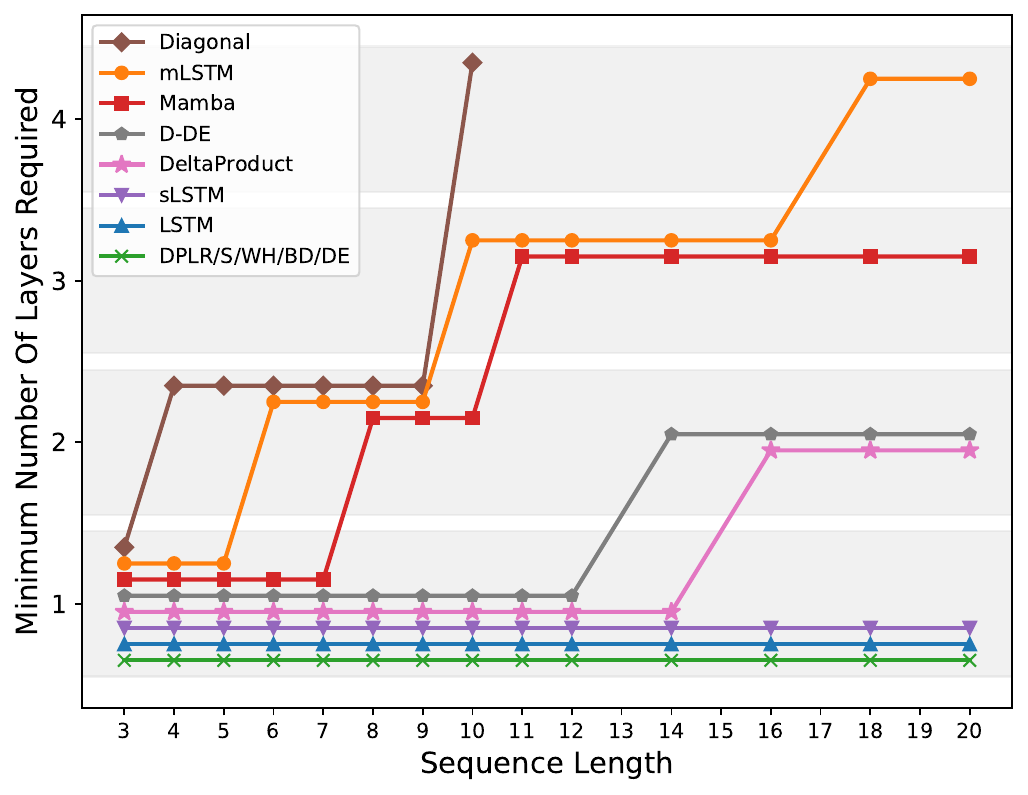}
        \caption{Sequence length against the minimum number of stacked layers required to achieve greater than $90\%$ validation accuracy. Each shaded region indicates an equivalent number of stacked layers.}
        \label{fig:a5}
    \end{subfigure}
    \begin{subfigure}{0.65\linewidth}
        \centering
        \includegraphics[width=\linewidth]{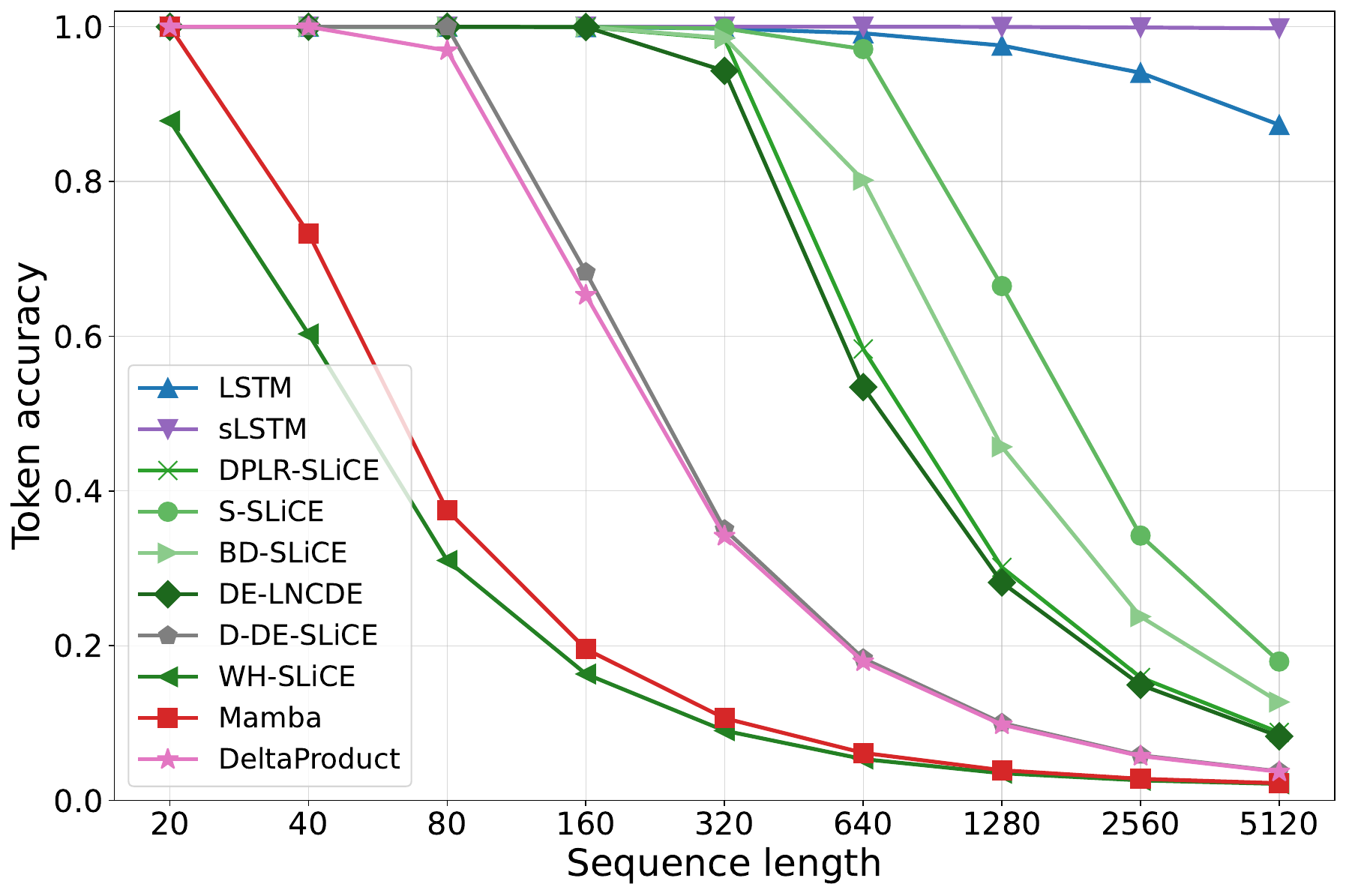}
        \caption{Test set token accuracy (\%) against sequence length.}
        \label{fig:a5_length_gen}
    \end{subfigure}
    \caption{\textbf{Results for $A_5$ Benchmark and $A_5$ length generalisation task.} Models evaluated are: Mamba, LSTM, mLSTM, sLSTM, Gated DeltaProduct with negative eigenvalues, and the SLiCEs. }
    \label{fig:a5_combined}
\end{figure*}

\subsection{Regular language tasks} \label{sec:fl}

The formal language benchmark is a collection of language style tasks split into categories using the Chomsky hierarchy \cite{chomsky1956three, deletang2023neural}. Here, we use the regular tasks, which can be solved by processing inputs sequentially with a fixed set of internal states and no external memory, i.e. state-tracking. On this benchmark, the models are challenged to generalise to longer sequences, by training on sequences from length $3$ to $40$ and evaluating on sequences from length $40$ to $256$. Details on the individual tasks can be found in Appendix \ref{app:fla}. A wide range of existing sequence model architectures are used as baselines, including LSTM~\cite{LSTM}, xLSTM and its two components mLSTM and sLSTM~\cite{beck2024xlstm}, four variations of DeltaNet~\cite{schlag2021linear, yang2024gatedlinearattentiontransformers, yang2025gateddeltanetworksimproving, siems2025deltaproductimprovingstatetrackinglinear, grazzi2024unlockingstatetrackinglinearrnns}, RWKV-7~\cite{peng2025rwkv7gooseexpressivedynamic}, a Transformer~\cite{vaswani2017attention}, S4D~\cite{Gu2022On}, and Mamba~\cite{gu2023mamba}. All models use two stacked layers. For each dataset and baseline model, we selected the hidden dimension that yields a higher validation accuracy from two choices. The choices were $128$ and $512$ for all models aside from Mamba, and  $256$ and $512$ for Mamba, as it does not support a hidden size of $128$. All SLiCEs use two stacked layers and $512$ non-zero parameters per state-transition matrix, except for the diagonal and Walsh--Hadamard, which also consider $128$, with the better performing choice reported for each dataset. For the DPLR, block-diagonal, and diagonal-dense SLiCEs, we consider multiple choices of rank and block-size, respectively, and present the best performing models. A thorough investigation of the effect of block-size and rank is given in Appendix \ref{app:fla}. We do not consider S-SLiCE on this benchmark, due to the lack of an efficient implementation.

Table~\ref{tab:fl_regular_only} presents the results. As expected, the recurrent LSTM generalises almost perfectly on all four tasks. Amongst the parallel models, DeltaNet with negative eigenvalues and Gated DeltaProduct with negative eigenvalues are the best performing baselines, aligning with the expectation that increased complexity in the state-transition matrix improves state-tracking performance. Similarly, D-SLiCE outperforms Mamba, aligning with the results of \citet{grazzi2024unlockingstatetrackinglinearrnns} that expanding the eigenvalue range of the state-transition matrix improves state-tracking performance. Similarly to the $A_5$ length generalisation task, the WH-SLiCE underperforms other SLiCE structures. 
However, unlike in the $A_5$ length generalisation task, the D-DE-SLiCE achieves the joint highest average validation accuracy among the parallelisable models, alongside DPLR-SLiCE. 

\begin{table}
\caption{\textbf{Results for formal language tasks.} Average and standard deviation of validation accuracy over five runs for a range of recurrent and parallel models.}
\label{tab:fl_regular_only}
\centering
\footnotesize
\begin{tabular}{lccccc}
\toprule
\multicolumn{1}{c|}{\textbf{Model}} & \textbf{Cycle Nav.} & \textbf{Even Pairs} & \makecell{\textbf{Mod Arith.} \\ \textbf{No Brack.}} & 
\multicolumn{1}{c}{\textbf{Parity}} & \multicolumn{1}{|c}{\textbf{Average}} \\
\midrule
\multicolumn{5}{l}{\textbf{Recurrent}} \\
\midrule
LSTM & $100.0 \pm 0.0$  & $100.0 \pm 0.0$ & $99.9 \pm 0.1$ & $100.0 \pm 0.0$ & $100$ \\
sLSTM & $32.5 \pm 0.4$  & $100.0 \pm 0.0$ & $27.7 \pm 0.6$ & $100.0 \pm 0.0$ & $65.1$\\
xLSTM[1:1] & $53.5 \pm 5.6$  & $99.0 \pm 1.9$ & $29.3 \pm 1.6$ & $100.0 \pm 0.0$ & $70.5$ \\
\midrule
\multicolumn{5}{l}{\textbf{Parallel}} \\
\midrule
DeltaNet & $49.8 \pm 4.7$  & $100.0 \pm 0.0$ & $42.2 \pm 4.8$ & $57.8 \pm 0.8$ & $62.5$ \\
DeltaNet$[-1,1]$ & $46.7 \pm 6.1$  & $100.0 \pm 0.0$ & $66.4 \pm 8.8$ & $97.7 \pm 2.0$  & $77.7$ \\
Gated DeltaNet & $53.8 \pm 8.8$  & $100.0 \pm 0.0$ & $42.8 \pm 8.2$ & $56.5 \pm 1.9$ & $63.3$ \\
Gated DeltaProduct[-1,1] & $46.3 \pm 6.6$ & $100.0 \pm 0.0$ & $78.4 \pm 10.9$ & $98.0 \pm 1.4$ & $80.7$ \\
RWKV-7 & $37.8 \pm 5.0$  & $88.1 \pm 14.2$ & $39.5 \pm 6.1$ & $51.1 \pm 0.3$ & $54.1$\\
mLSTM  &$52.4 \pm 10.5$  & $99.9\pm 0.1$ & $28.8 \pm 3.1$ & $53.0 \pm 2.1$ & $58.5$ \\
Transformer & $24.4\pm 0.5$  & $90.4 \pm 10.4$ & $23.6 \pm 0.7$ & $52.2 \pm 0.4$ & $47.7$ \\
Mamba & $48.4 \pm 2.2$  & $100.0\pm 0.0$ & $33.1\pm 6.6$ & $54.2 \pm 2.1$ & $58.9$ \\
S4D & $23.7 \pm 1.1$  & $68.7 \pm 4.7$ & $21.7 \pm 0.4$ & $51.2 \pm 1.0$ & $41.3$ \\
D-SLiCE & $69.5\pm 6.3$ & $100.0 \pm 0.0$ & $20.9 \pm 0.1$ & $100.0 \pm0.0$ & $72.6$ \\
WH-SLiCE & $69.7 \pm 8.8$ & $93.1 \pm 13.9$ & $23.8 \pm 1.1$ & $71.4 \pm 12.9$ & $64.5$ \\
BD-SLiCE$_{d_h=128,b=4}$ & $99.8 \pm 0.2$ & $85.9 \pm 11.3$ & $54.0 \pm 12.5$ & $95.3 \pm 3.9$ & $83.8$ \\
D-DE-SLiCE$_{d_h=272,b=16}$ & $73.3 \pm 29.4$ & $84.8 \pm 8.5$ & $98.4 \pm 0.7$ & $83.8 \pm 11.3$ & $85.1$ \\
DPLR-SLiCE$_{d_h=57,r=4}$ & $81.1 \pm 16.6$ & $100.0 \pm 0.0$ & $68.3 \pm 19.3$ & $91.0 \pm 18.0$ & $85.1$ \\
\midrule
\bottomrule
\textbf{Random} & $20.0$  & $50.0$  & $20.0$ & $50.0$ & $35.0$ \\
\bottomrule
\end{tabular}
\vspace{-2mm}
\end{table}

\subsection{UEA multivariate time-series classification archive} \label{sec:uea}

Since SLiCEs are descendants of NCDEs, they inherit a number of the desirable properties which arise from having a natural continuous-time formulation. These include robustness to irregular sampling rates and decoupling the number of recurrent steps from the number of observations in the time-series \cite{kidger2020neuralcde, Walker2024LogNCDE}. Furthermore, SLiCEs have the same theoretical expressivity as NCDEs, whilst being parallel-in-time, making them an attractive alternative for real-world time-series modelling.

As a demonstration of the practical benefits, we consider six datasets from the UEA Multivariate Time-Series Classification Archive (UEA-MTSCA), a collection of time-series classification tasks, ranging from classifying worms into species based on movement to classifying alcohol by concentration using vibrational spectroscopy \cite{bagnall2018ueamultivariatetimeseries}. \citet{Walker2024LogNCDE} showed that Log-NCDEs outperform the linear recurrent unit (LRU)~\cite{orvieto2023resurrecting}, S5~\cite{S5}, S6~\cite{gu2023mamba}, and Mamba~\cite{gu2023mamba} on average test accuracy over the six longest datasets with at least $200$ observations \cite{Walker2024LogNCDE}. However, the per-step training time is significantly higher for Log-NCDEs than the baseline methods. 

Keeping all other hyperparameters the same, Table \ref{tab:uea_full_results_rounded} presents the impact of replacing the non-linear vector field of a Log-NCDE with a structured linear vector field. The GPU memory and time per $1000$ training steps were recalculated for all models on an NVIDIA H100. 
BD-SLiCE achieves similar performance to the Log-NCDE, whilst reducing the average per-step training time by a factor of nearly $20$ and increasing the average GPU memory usage by only $8\%$. 
Appendix \ref{app:uea} presents the results for individual datasets and analyses the impact of the Log-ODE method and parallel associative scan on run-time and GPU memory.

\begin{table}
\caption{\textbf{Average test accuracy, rank, training time per 1,000 steps, and GPU memory usage across six datasets from the UEA-MTSCA.} All SLiCE variants use a parallel associative scan during training. Therefore, the Walsh–Hadamard, DPLR, and sparse SLiCEs are treated as dense LNCDEs (see Section \ref{sec:parallel}), and their timing and GPU memory results are omitted.}
\vspace{0.5mm}
\label{tab:uea_full_results_rounded}
\centering
\footnotesize
\begin{tabular}{lcccc}
\toprule
\textbf{Model} & \textbf{Av. Acc} & \textbf{Av. Rank} & \textbf{Av. Time / 1k Steps (s)} & \textbf{Av. GPU mem (MB)} \\
\midrule
BD-SLiCE & $64.0$ & $3.2$ & $68.1$ & $2344$\\
Log-NCDE & $64.3$ & $4.0$ & $1321.7$ & $2177$ \\
D-DE-SLiCE & $63.0$ & $5.7$ & $66.7$ & $2302$ \\
WH-SLiCE & $62.5$ & $6.7$ & $-$ & $-$ \\
DPLR-SLiCE & $62.0$ & $7.0$ & $-$ & $-$ \\
D-SLiCE & $61.7$ & $7.2$ & $11.0$ & $1875$ \\
LRU & $61.7$ & $7.3$ & $26.9$ & $4308$ \\
S6 & $62.0$ & $7.7$ & $20.1$ & $2938$ \\
S5 & $61.8$ & $8.0$ & $21.9$ & $3327$ \\
S-SLiCE & $61.8$ & $8.2$ & $-$ & $-$ \\
DE-LNCDE & $61.6$ & $8.3$ & $77.2$ & $12756$ \\
NCDE & $60.2$ & $8.8$ & $6923$ & $1962$ \\
NRDE & $60.6$ & $10.3$ & $3431$ & $2858$ \\
Mamba & $58.6$ & $10.8$ & $60.0$ & $4535$ \\
\bottomrule
\end{tabular}
\vspace{-2mm}
\end{table}

\section{Limitations and Future Work}\label{sec:limitations}

To reduce the computational burden of SLiCEs, our implementation approximates the matrix exponential when computing the flow via \eqref{eq:composee}. 
However, even with this adjustment, scaling SLiCEs to the multi-billion parameter regime remains challenging. 
A key technical goal is the development of efficient GPU kernels for the matrix exponential and parallel associative scans, particularly when handling many small independent systems, such as for BD-SLiCEs. 
Alternatively, building on the work of \citet{yang2024parallelizing} and \citet{cirone2025parallelflow}, fast chunk-wise methods for a broader class of structured matrices may offer a viable path forward.

Alternative SLiCE architectures may achieve maximal expressivity and improved empirical performance. A theoretical characterisation of the conditions that a SLiCE's structured matrix needs to satisfy to achieve maximal probabilistic expressivity would aid the search for additional structures. Moreover, although establishing maximal probabilistic expressivity is a significant step towards a deeper theoretical understanding of structured state-transition matrices, expressivity at finite hidden dimensions remains an open challenge.

Finally, unlike NCDEs, SLiCEs are sequence-to-sequence models that update their state with each input sample.
Therefore, similarly to other discrete sequence models, SLiCEs are susceptible to over-sampled data. 
Combining SLiCEs with the Log-ODE method enables path-based inputs by operating with flows over intervals, rather than individual samples. 
However, the Log-SLiCE outputs a sequence whose elements correspond to the boundary values of the output path for each interval the Log-ODE method was applied to.
Therefore, you cannot stack two Log-SLiCEs, as the first level has produced a sequence, whereas the second level consumes a path.
A natural direction for future work is developing a true path-to-path model.

\section{Conclusion}

This paper introduced SLiCEs, a unifying framework for sequence-to-sequence layers that are maximally expressive, computationally efficient, and allow for parallel-in-time computation. We explored four specific instances, diagonal-plus-low-rank, sparse, Walsh--Hadamard, and block-diagonal, analysing their theoretical properties and empirical performance. Theorems~\ref{thm:bdlncde}, \ref{thm:slncde}, and \ref{thm:whlncde} established that block-diagonal, sparse, and Walsh--Hadamard SLiCEs achieve maximal probabilistic expressivity. Furthermore, all SLiCE structures demonstrated single-layer state-tracking on the $A_5$ benchmark, unlike the other parallelisable layers considered: diagonal SLiCEs, mLSTM, Mamba, and DeltaProduct. Among the SLiCEs, block-diagonal stands out as the only maximally expressive variant that strictly reduces parameter count, recurrent cost, and parallel cost compared to dense LNCDEs. Additionally, a variant of the block-diagonal SLiCE achieved the joint highest average validation accuracy among parallel models on the regular language tasks from the formal language benchmark. Finally, practical speed-ups for real-world time series modelling were demonstrated on six multivariate time-series classification datasets, where replacing the non-linear vector field of a Log-NCDE with a block-diagonal linear vector field reduced the average time per training step by a factor of twenty, without impacting the model's overall performance.

\section*{Acknowledgements}

We thank Joël Mouterde, Jérôme Tomezyk, Sam Morley, and Alexandre Bloch for engaging and insightful discussions on the design and training of linear neural controlled differential equations. We thank \citet{merrill2024illusion}, \citet{deletang2023neural}, and \citet{bagnall2018ueamultivariatetimeseries} for the $A_5$, formal language, and UEA datasets respectively.

Benjamin Walker was funded by the Hong Kong Innovation and Technology Commission (InnoHK Project CIMDA). Lingyi Yang is supported by EPSRC [EP/S026347/1] and the Hong Kong Innovation and Technology Commission (InnoHK Project CIMDA). 
Nicola Muca Cirone is supported by the EPSRC Centre for Doctoral Training in Mathematics of Random Systems: Analysis, Modelling and Simulation [EP/S023925/1].
Terry Lyons was supported in part by the UKRI EPSRC through the Programme Grants High order mathematical and computational infrastructure for streamed data that enhance contemporary generative and large language models (UKRI1010) and Unparameterised multi-model data, high order signatures and the mathematics of data science (EP/S026347/1) and UKRI AI for Science award UKRI2385; he was supported by  The Alan Turing Institute under the EPSRC Grant EP/N510129/1, the Defence and Security Programme (funded by the UK Government), and through CIMDA@Oxford, part of the AIR@InnoHK initiative funded by the Innovation and Technology Commission, HKSAR Government.
The authors would like to acknowledge the use of the University of Oxford Advanced Research Computing (ARC) facility in carrying out this work: http://dx.doi.org/10.5281/zenodo.22558.
For the purpose of Open Access, the author has applied a CC
BY public copyright licence to any Author Accepted Manuscript (AAM) version arising from this submission.

\newpage

\bibliographystyle{plainnat}
\bibliography{references}


\appendix

\section{Linear Neural Controlled Differential Equations}

\label{app:intro}

\subsection{The Connection to Structured State-Space Models}

\label{sec:ssms_are_lncdes}

Viewing SSMs as LNCDEs provides a theoretical framework for comparing architectures and reasoning about their expressivity. Following the approach of \citet{cirone2024theoretical}, this appendix shows how S6, the recurrent core of Mamba, can be recast as an LNCDE. Additionally, this chapter highlights the limits of diagonal state-transition matrices, details the connection between our work and matrix-valued hidden states, and provides further implementation details for our SLiCE models.

S6 is defined by
\begin{equation}
    \label{eq:s6}
    h^j_{t_{i+1}} = \bar{C}_{\theta}^j(x_{t_i})h^j_{t_i} + \bar{D}_{\theta}^j(x_{t_i})x^j_{t_i},
\end{equation}
where $j$ denotes the input channel,
\begin{equation}
\begin{aligned}
    \bar C_{\theta}^j(x_{t_i}) &= \exp(\Delta^j(x_{t_i}) C_{\theta}), \\
    \bar D_{\theta}^j(x_{t_i}) &= (\Delta^j(x_{t_i}) C_{\theta})^{-1} (\exp(\Delta^j(x_{t_i}) C_{\theta})-I)\Delta^j(x_{t_i}) D_{\theta}x_{t_i}, \\
    &\approx \Delta^j(x_{t_i}) D_{\theta}x_{t_i},
    \label{eq:AB_formula}
\end{aligned}
\end{equation}
with trainable parameters $D_{\theta}\in\mathbb{R}^{d_h \times d_x}$ and a diagonal state-transition matrix $C_{\theta}\in\mathbb{R}^{d_h \times d_h}$, and
\begin{equation}
    \Delta^{j}(x_{t_i}) = \text{softplus}(\alpha_{\theta}^j \cdot x_{t_i} + \beta_{\theta}^j),
\end{equation}
with trainable parameters $\alpha_{\theta}^j\in\mathbb{R}^{d_x}$ and $\beta_{\theta}^j\in\mathbb{R}$ \citep{gu2023mamba}.
Equation \eqref{eq:s6} can be considered a zero-order hold discretisation of
\begin{equation}
    \mathrm{d}h^j_s = C_{\theta}\Delta^j(X_{s})h^j_s + D_{\theta}X_s\Delta^j(X_{s})X^j_s \mathrm{d}s,
\end{equation}
where $X_s$ is an interpolation of $\{x_{t_i}\}_{i=0}^n$. 
As shown by \citet{cirone2024theoretical}, these equations can be stacked and rewritten as an affine LNCDE,
\begin{equation}
    \label{eq:affine_lin_cde}
    h_t = h_{t_0} + \int_{t_0}^t \sum_{i=0}^{d_x}A^i_{\theta}h_s \mathrm{d}\omega^{X, i}_s + \int_{t_0}^t B_{\theta}\mathrm{d}\xi^{X}_s,
\end{equation}
where
\begin{equation}
\label{eqn:classical_SSM_gates}
\begin{aligned}
    \omega^{X,i}_t &= \int_{t_0}^t \text{softplus}(\alpha^i \cdot X_{s} + \beta^i)  \mathrm{d}s, \\
    \xi^{X}_t &= \int_{t_0}^t \begin{bmatrix}X_t\text{softplus}(\alpha^1 \cdot X_{s} + \beta^1)X^1_s \\ \vdots \\  X_t\text{softplus}(\alpha^{d_X} \cdot X_{s} + \beta^{d_X})X^{d_X}_s \end{bmatrix} ds,
\end{aligned}
\end{equation}
and
\begin{equation}
    \begin{aligned}
    A^i_{\theta} &=\text{diag}(0,\ldots,0,C_{\theta},0,\ldots,0)\in\mathbb{R}^{d_hd_x \times d_hd_x}, \\
    B_{\theta}&=\text{diag}(D_{\theta},\ldots,D_{\theta})\in\mathbb{R}^{d_hd_x\times d_x^2},
    \end{aligned}
\end{equation}
with the non-zero diagonal element of $A^i_{\theta}$ in the $i^{\text{th}}$ position. Neglecting the bias term gives a specific instance of an LNCDE,
\begin{equation}
    \label{eq:affine_lin_cde2}
    h_t = h_{t_0} + \int_{t_0}^t \sum_{i=0}^{d_x}A^i_{\theta}h_s \mathrm{d}\omega^{X, i}_s.
\end{equation}
The two core differences between the LNCDEs used in this paper and Mamba are:
\begin{enumerate}
    \item More general forms for the driving path $\omega^X_t$. For example, the LNCDEs in this paper use
    \begin{equation}
        \frac{\omega^{X}_{t_{j+1}}-\omega^{X}_{t_{j}}}{t_{j+1}-t_j} = (1, X_{t_j})
    \end{equation}
    on the $A_5$ and formal language benchmarks and $\omega^X_t=X_t$ on the UEA benchmarks.
    \item Mamba uses diagonal $A^i_{\theta}$, whereas an LNCDE uses dense $A^i_{\theta}$.
\end{enumerate}
\citet{cirone2024theoretical} use the LNCDE perspective to provide a full theoretical analysis on the impact of using diagonal matrices instead of dense matrices. Here, we give a simplistic example demonstrating the difference in expressivity between diagonal and dense state-transition matrices.
\\ \\
Consider a stream of bits
\begin{equation}
x_1,x_2,\dots ,\; x_n\in\{0,1\},
\end{equation}
where we want to predict the parity label defined by 
\begin{equation}
p_n=S_n \bmod 2 \in\{0,1\}, \quad S_n = \sum_{k=1}^nx_k.
\end{equation}
Whenever a new bit is $1$ the label flips; if the bit is $0$ the label stays the same. 
Taking a diagonal LNCDE with a hidden dimension of $2$ and $\omega^x_{k+1}-\omega^x_k=x_{k+1}$, then 
\begin{equation}
h_{n+1}=\exp\left(\begin{bmatrix}a_1&0\\0&a_2\end{bmatrix} x_{n+1}\right)\,h_n,
\end{equation}
and
\begin{equation}
h_n^{i}=h_0^{i}\exp(a_i S_n),\qquad i=1,2.
\end{equation}
With a linear read‑out $r=(r_1,r_2)^\top$ followed by a monotone activation $\phi$ (such as tanh, ReLU, sigmoid):
\begin{equation}
\hat p_n=\phi\!\Bigl(r^\top h_n\Bigr)
        =\phi\!\Bigl(c_1 e^{a_1 S_n}+c_2 e^{a_2 S_n}\Bigr)
        =\phi\!\Bigl(f(S_n)\Bigr),\qquad
        c_i=r_i h_0^{i}.
\label{eq:sum-of-exp}
\end{equation}
Since $f(S)$ has at most one turning point, and $\phi$ is monotone, $\hat{p}_n$ can cross any chosen threshold at most twice. 
However, the true label $p_n$ flips every time $S_n\mapsto S_n+1$.
Hence, no diagonal $2\times2$ LNCDE can realise parity on arbitrarily long input.
Similarly, for a hidden dimension of $n$, $f(S)$ can have at most $n-1$ turning points, so no diagonal LNCDE with a fixed hidden dimension can realise parity on arbitrarily long input.
If you replace $A$ with
\begin{equation}
A=\begin{pmatrix}0&\pi\\-\pi&0\end{pmatrix}.
\end{equation}
then
\begin{equation}
\exp(A x_{n+1})=
\begin{pmatrix}
1&0\\0&1
\end{pmatrix},
\end{equation} 
when $x_{n+1}=0$ and
\begin{equation}
\exp(A x_{n+1})=
\begin{pmatrix}
-1&0\\0&-1
\end{pmatrix},
\end{equation}
when $x_{n+1} = 1$. Thus
\begin{equation}
h_{n+1}=(-1)^{x_{n+1}}h_n
\end{equation}
and
\begin{equation}
h_n=(-1)^{S_n}h_0.
\end{equation}
Taking $r=(1,0)^\top$, $h_0^{(1)}=1$, and $\phi(s)=\bigl(1-\operatorname{sign}(s)\bigr)/2$, 
\begin{equation}
\hat p_n=\frac{1-\operatorname{sign}\bigl((-1)^{S_n}\bigr)}{2}=S_n\bmod 2=p_n.
\end{equation}
Therefore, a dense LNCDE can solve parity exactly with a hidden dimension of 2.

This example aligns with the results of \citet{cirone2024theoretical}: diagonal $A^i_{\theta}$ are fundamentally less expressive than dense $A^i_{\theta}$. However, as noted in Section \ref{sec:LNCDEs}, the additional computational cost makes dense state-transition matrices infeasible in practice. This trade-off between efficiency and expressivity is the motivation for the structured $A^i_{\theta}$ in the SLiCEs introduced in this paper.

\subsection{Matrix-valued Hidden States}\label{app:matrix-valued}

When rewriting Mamba as an affine LNCDE \eqref{eq:affine_lin_cde}, the hidden states for each channel of the input are stacked vertically. An alternative approach is to view the hidden states for each channel as the columns of a matrix, producing a LRNN with a matrix-valued hidden state.

Matrix-valued LRNNs originated as an alternative viewpoint on linear Transformers, where softmax is replaced with a kernel admitting a finite-dimensional feature map,
$\kappa(q,k)=\phi(q)^\top\phi(k)$ for $\phi:\mathbb{R}^{d_q}\to\mathbb{R}^{d_\phi}$. Letting $W^q_{\theta}\in\mathbb{R}^{d_q\times d_x}$, $W^k_{\theta}\in\mathbb{R}^{d_q\times d_x}$, and
$W^v_{\theta}\in\mathbb{R}^{d_v\times d_x}$ be learnable weights, then causal linear attention is defined by
\begin{equation}
o_{t_i} = \frac{\sum_{j=1}^i \phi\big(W^q_{\theta}x_{t_i}\big)^{\top}\phi\big(W^k_{\theta}x_{t_j}\big)\, W^v_{\theta}x_{t_j}}
             {\sum_{j=1}^i \phi\!\big(W^q_{\theta}x_{t_i}\big)^{\top}\phi\big(W^k_{\theta}x_{t_j}\big)}.
\end{equation}
As shown by \citet{pmlr-v119-katharopoulos20a}, this admits an RNN formulation,
\begin{equation}
\begin{aligned}
S_{t_i} &= S_{t_{i-1}} + \phi\big(W^k_{\theta} x_{t_i}\big)\big(W^v_{\theta} x_{t_i}\big)^{\top}, \\
z_{t_i} &= z_{t_{i-1}} + \phi\big(W^k_{\theta} x_{t_i}\big), \\
o_{t_i} &= \frac{\phi\big(W^q_{\theta} x_{t_i}\big)^{\top} S_{t_i}}{\phi\big(W^q_{\theta} x_{t_i}\big)^{\top} z_{t_i}},
\end{aligned}
\end{equation}
where $S_i \in \mathbb{R}^{d_\phi \times d_v}$ with $S_0=0$, $z_i\in\mathbb{R}^{d_\phi}$ with $z_0=0$, and $o_i\in\mathbb{R}^{d_v}$. Ignoring the normalisation, the core recurrence is a matrix-valued LRNN.

Matrix-valued LRNNs are a useful framework for understanding several recent sequence models, as highlighted by \citet{yang2024parallelizing}. Many of these models share the general form:
\begin{equation}
    \label{eq:mv_lrnn}
    S_{t_{i+1}} = S_{t_i} \bullet M_{t_i} + v(x_{t_i})k(x_{t_i})^\top,
\end{equation}
where $\bullet$ denotes an associative operator and $v$ and $k$ are potentially non-linear functions of $x_t$. 
For example, Mamba's recurrence \eqref{eq:s6} can be rewritten as
\begin{equation}
    \label{eq:mv_hrnn}
    S_{t_{i+1}} = S_{t_i} \odot M_{t_i} + v(x_{t_i})k(x_{t_i})^\top,
\end{equation}
where $\odot$ refers to the Hadamard (element-wise) product and $S \in \mathbb{R}^{d_h \times d_x}$ \citep{yang2024parallelizing}. 
The Hadamard product is a direct consequence of Mamba's diagonal state-transition matrix, which inherently prevents interaction between individual elements of the hidden state.
This framework gives a clear interpretation for a key modification to the recurrence introduced by Mamba-2~\citep{Dao2024Transformers},
\begin{equation}
    S_{t_{i+1}} = \gamma(x_{t_i})S_{t_i}+ v(x_{t_i})k(x_{t_i})^\top,
\end{equation}
where $\gamma$ is a real-valued function. 
Additionally, it highlights the structural similarity between Mamba and linear Transformers.

Replacing the Hadamard product by a matrix product allows modelling richer interactions between the hidden states,
\begin{equation}
    \label{eq:mv_mlrnn}
    S_{t_{i+1}} = S_{t_i} M_{t_i} + v(x_{t_i})k(x_{t_i})^\top.
\end{equation}
However, similarly to using dense matrices in an LNCDE, the cost makes models of this type intractable in larger models. 
DeltaNet uses a diagonal-plus-rank-one structure,
\begin{equation}
    S_{t_{i+1}} = S_{t_i} (I - \beta(x_{t_i}) k(x_{t_i})k(x_{t_i})^\top)  + \beta(x_{t_i}) v(x_{t_i})k(x_{t_i})^\top,
\end{equation}
where $\beta$ is a real-valued function \citep{schlag2021linear, yang2024parallelizing}. 
DeltaProduct later generalised DeltaNet to DPLR matrices~\citep{siems2025deltaproductimprovingstatetrackinglinear}. 
Beyond reducing parameter count and recurrent computational cost, the DPLR structure of DeltaNet also facilitates an efficient chunk-wise algorithm that can outperform parallel associative scans for large hidden dimensions, as outlined in \citep[Section 3.2]{yang2024parallelizing}.

Many recent sequence models can also be viewed as matrix-valued LRNNs. These include, but are not limited to, Gated DeltaNet \citep{yang2025gateddeltanetworksimproving}, RWKV-7 \citep{peng2025rwkv7gooseexpressivedynamic}, HGRN-2 \citep{Qin2024HGRN2}, mLSTM \citep{beck2024xlstm}, Gated Linear Attention \citep{yang2024gatedlinearattentiontransformers}, Gated Random Feature Attention \citep{peng2021random}, Gated Slot Attention \citep{zhang2024gated}, TTT-Linear \citep{sun2025learninglearntesttime}, and Titans \citep{behrouz2024titanslearningmemorizetest}. A detailed comparison of the specific form of \eqref{eq:mv_lrnn} for many of these models is provided in Table 2 of \citep{yang2024parallelizing}. \citet{beck2024xlstm} introduced mLSTM alongside a non-linear recurrent model sLSTM, which together form their sequence model xLSTM. These components are designed to play different roles, with mLSTM acting as the memory and sLSTM performing the reasoning. Section \ref{sec:exp} uses mLSTM, sLSTM, and xLSTM as baseline methods to highlight the different roles the components play.

Matrix-valued LRNNs can be converted into vector-valued LRNNs by returning to the stacking approach of Section \ref{sec:ssms_are_lncdes}.
Let $\operatorname{vec}:\mathbb{R}^{m \times n} \to \mathbb{R}^{mn}$ be the column-major vectorisation operator, which transforms a matrix into a vector by stacking its columns. 
Letting $\otimes$ denote the Kronecker product, the Hadamard product LRNN \eqref{eq:mv_hrnn} can be rewritten as
\begin{equation}
    \operatorname{vec}(S_{t_{i+1}}) = \operatorname{diag}(\operatorname{vec}(M_{t_i}))\operatorname{vec}(S_{t_i}) +  k(x_{t_i}) \otimes v(x_{t_i}),
\end{equation}
where the state-transition matrix is diagonal, consistent with Mamba's design. 
Noting that
\begin{equation}
    \operatorname{vec}(AXB) = (B^\top \otimes A)\operatorname{vec}(X),
\end{equation}
then the LRNN with a matrix product \eqref{eq:mv_mlrnn} can be written as
\begin{equation}
    \operatorname{vec}(S_{t_{i+1}}) = (M_{t_i}^\top \otimes I_{d_h}) \operatorname{vec}(S_{t_i}) +  k(x_{t_i}) \otimes v(x_{t_i}).
\end{equation}
In both cases, a matrix-valued recurrence is equivalent to a vector-valued recurrence on a flattened hidden state, where the state-transition matrix is constrained to have a specific structure. 
For the Hadamard product, this gives a diagonal matrix, while the matrix product leads to a Kronecker product structure.

This work is focused on the vector-valued case. 
The insights gained from this analysis naturally extend to the matrix-valued setting. 
For instance, the limitations in expressivity of diagonal matrices directly translate to the Hadamard product formulation. 
Similarly, the Kronecker product structure arising from vectorising a matrix-valued recurrence clearly illustrates how the properties of the constituent matrices determine the properties of the overall state transition. 

LNCDEs can be generalised to matrix-valued hidden states by reversing the above vectorisation. Alternatively, one can run $m$ copies of the same LNCDE via
\begin{equation}
    H_t = H_{t_0} + \int_{t_0}^t \left(\sum_{i=1}^{d_{\omega}} A^i_{\theta}\,\mathrm{d}\omega^{X,i}_s\right)H_s,
\end{equation}
where $H_s\in\mathbb{R}^{d_h\times m}$. Clearly, $m$ copies of a vector-valued LNCDE or SLiCE match the expressivity of a single copy, so all of our theoretical results naturally carry over to this setting. However, the columns only differ due to their initial conditions. To introduce meaningful column-specific dynamics, you can include a column-specific bias term in the vector field. A natural way to incorporate such biases into a matrix-valued LNCDE is to consider an affine LNCDE,
\begin{equation}
    H_t = H_{t_0} + \int_{t_0}^t \left(\sum_{i=1}^{d_{\omega}} A^i_{\theta}\,\mathrm{d}\omega^{X,i}_s\right)H_s + B_{\theta}\,\mathrm{diag}(\mathrm{d}\xi^{X}_s),
\end{equation}
where $B_{\theta}\in\mathbb{R}^{d_h \times m}$ and $\xi^{X}:[0,T]\rightarrow\mathbb{R}^m$ is a path, dependent on the input $X$. Letting $h^k_t$ for $1\leq k\leq m$ denote the columns of $H_t$, then
\begin{equation}
    h^k_t = h^k_{t_0} + \int_{t_0}^t \left(\sum_{i=1}^{d_{\omega}} A^i_{\theta}\,\mathrm{d}\omega^{X, i}_s\right)h^k_s + B^k_{\theta}\mathrm{d}\xi^{X,k}_s.
\end{equation}
Approximating $\omega_s$ and $\xi_s$ with linear interpolation on the grid $0=t_0<\cdots<t_n=T$ yields
\begin{equation}
    \tilde{h}^k_{t_{j+1}} \approx \exp\left(\sum_{i=1}^{d_{\omega}} A^i_{\theta}\,\left(\omega^{X, i}_{t_{j+1}} - \omega^{X, i}_{t_{j}}\right)\right)\tilde{h}^k_{t_{j}} + B^k_{\theta}(\xi^{X,k}_{t_{j+1}}-\xi^{X,k}_{t_{j}}),
\end{equation}
where we have used the approximation
\begin{equation}
    \int_{0}^{t_{j+1}-t_j} \exp\left(\tau\sum_{i=1}^{d_{\omega}} A^i_{\theta}\frac{\omega^{X, i}_{t_{j+1}} -\omega^{X, i}_{t_{j}}}{t_{j+1}-t_j}\right)\mathrm{d}\tau \approx (t_{j+1}-t_j)I,
\end{equation}
for small increments. The outputs $\tilde{h}^k_{t_j}$ remain computable in $\mathcal{O}(\log(n))$ parallel steps using a parallel associative scan \cite{blelloch1990prefix}. 

\subsection{Implementation Details}

\label{app:impl_det1}

Algorithm~\ref{alg:slice} provides a pseudo-code implementation for the forward pass of a SLiCE. The approach is demonstrated for dense state-transition matrices $A^i_{\theta}$, with comments highlighting where a SLiCE's structure can be used to speed up computation or reduce memory footprint. In this paper, each SLiCE recurrence is embedded in a simple block structure, combining a linear layer to mix the channels, tanh activation function, layer normalisation, and a skip connection. Inspired by the $\mathrm{Lip}(\gamma)$ regularisation of Log-NCDEs, all SLiCEs employ weight regularisation on their state transition matrices.

\begin{algorithm}
\caption{\textbf{Structured Linear CDE}: The algorithm is presented for a dense state-transition matrix $A_{\theta}$. Comments indicate where the structure of $A_{\theta}$ can be used to reduce memory footprint and speed-up computation.}%
\label{alg:slice}
\textbf{Input:}  $\boldsymbol{\omega} : (B,\,L,\,d_{\omega})$\\
\textbf{Output:} $\mathbf{h} : (B,\,L,\,d_h)$
\begin{algorithmic}[1]
    \State $\mathbf{h}_0 : (B,\,d_h) \gets \xi_{\phi}(\boldsymbol{\omega}_0)$
    \State $\boldsymbol{\omega}^{\text{inc}} : (B,\,L-1,\,d_{\omega}) \gets \text{diff}(\boldsymbol{\omega})$
    \State $A_{\theta} : (d_\omega,\,d_h,\,d_h) \gets \textit{Parameter}$ \Comment{Exploit structure of $A_{\theta}$}
    \State $\mathbf{I} : (d_h,\,d_h) \gets d_h\times d_h \text{ identity matrix.}$
    \If{$\textit{mode}=\text{parallel}$}
        \State $\mathbf{F} : (B,\,L-1,\,d_h,\,d_h) \gets \mathbf{I} + \text{einsum}(bli,ijk\rightarrow bljk, \boldsymbol{\omega}^{\text{inc}}, A_{\theta}) $ \Comment{Broadcast $\mathbf{I}$, exploit structure of $A_{\theta}$}
        \State $\mathbf{F}^{\text{comp}} \gets \text{pscan}(\mathbf{F})$ \Comment{Parallel associative scan, exploit structure of $\mathbf{F}$}
        \State $\mathbf{h}_{1:L-1} \gets \text{einsum}(bljk,bk\!\rightarrow\!blj,\,\mathbf{F}^{\text{comp}},\,\mathbf{h}_0)$
        \State $\mathbf{h} \gets \bigl[\mathbf{h}_0,\mathbf{h}_{1:L-1}\bigr]$
    \Else\Comment{Recurrent pass}
        \For{$t \gets 0$ \textbf{to} $L-2$}
            \State $\mathbf{F}_t : (B, \,d_h, \,d_h) \gets \mathbf{I} + \text{einsum}(bi,ijk\rightarrow bjk, \boldsymbol{\omega}^{\text{inc}}_t, A_{\theta})$ \Comment{Exploit structure of $A_{\theta}$}
            \State $\mathbf{h}_{t+1} \gets \text{einsum}(bjk,bk\!\rightarrow\!bj,\, \mathbf{F}_t,\, \mathbf{h}_{t})$ \Comment{Exploit structure of $\mathbf{F}$}
        \EndFor
        \State $\mathbf{h} \gets \bigl[\mathbf{h}_{0},\ldots,\mathbf{h}_{L-1}\bigr]$  \Comment{Stack along length axis}
    \EndIf
    \State \Return $\mathbf{h}$
\end{algorithmic}
\end{algorithm}

\section{Expressivity}

\label{app:expressivity}

\subsection{Introduction}

Rough path theory \cite{chen1954iterated, lyons1998differential} provides a mathematical framework for analysing continuous-time paths $X:[0,T]\to\mathbb{R}^d$. Because it is formulated in continuous time, it can be applied to time series observed at irregular sampling times, a common situation in real-world applications. A central object is the path signature, defined in \eqref{eq:sig}, which is a graded sequence of iterated integrals that characterises the path. The signature encodes geometric features such as increments and areas, it determines the path up to tree-like equivalence \citep{hambly2010uniqueness}, and linear maps of signatures are maximally expressive on suitable compact sets of paths in the sense of Definition \ref{def:universal_approximation_general}. The combination of maximal expressivity and a natural grading makes signature coefficients particularly well-suited as feature representations for machine learning tasks involving sequential data \citep{levin2016learning, fermanian2023new, cass2024lecture}. These techniques have experienced substantial growth in popularity and have been successfully implemented across diverse domains, with applications spanning deep learning \citep{graham2013sparse, kidger2019deep, morrill2021neural, morrill2022on, cirone2023neural, hoglund2023neural, cirone2024theoretical, Walker2024LogNCDE, issa2024non, barancikova2024sigdiffusions, Qin2025DynamicGraphEmbeddings, berndt2025permutationequivariantneuralcontrolled}, kernel methods \citep{kiraly2019kernels, salvi2021signature, salvi2021rough, lemercier2021distribution, lemercier2021siggpde, manten2024signature}, and quantitative finance \citep{arribas2020sigsdes, salvi2021higher, horvath2023optimal, cohen2023nowcasting, pannier2024path, cirone2025rough}. Additionally, signature methods have proven valuable in information theory \citep{salvi2023structure, shmelev2024sparse}, cybersecurity \citep{cochrane2021sk}, sepsis detection \citep{Morrill2019, cohen2023subtle}, and computational neuroscience \citep{holberg2024exact}, demonstrating their versatility across scientific disciplines.
Signature methods are practical thanks to efficient, well-developed packages for computing them \citep{esig, morley2024roughpy}.
Some introductory texts to signatures and rough path theory are \citep{SigPrimer} and \citep{lyons2007differential}. This section utilises the tools of rough path theory to characterise the expressivity of the structured linear controlled differential equations introduced in this paper.

The core of our models is the linear controlled differential equation \eqref{eq:lin_cde}:
\begin{equation}
    \mathrm{d}h_s = \sum_{i=1}^{d_\omega}A^ih_s\text{d}\omega^i_s, \quad h_0 \in \mathbb{R}^{d_h}.
\end{equation}
By \cite[Proposition B.4]{cirone2024theoretical}, this can be written in terms of the signature as 
\begin{equation}\label{eqn:CDE_sig_series}
    \mathbf{h}((A^i)_i, h_0, \omega)_t := h_t = \sum_{I \in \mathbb{W}_{d_\omega}} ( A^I h_0 ) ~ S^{I}(\omega)_{[0, t]},
\end{equation}
where $\mathbb{W}_{d_\omega}$ is the set of words in the alphabet $[[d_\omega]] := \{1, \dots, d_\omega\} ~ $ (i.e. $\mathbb{W}_{d_\omega} = \bigcup_{n\geq 0} [[d_\omega]]^n$ ) and for a given word $I = i_1 \dots i_n$, $S^I(\omega)_{[0,t]}$ referrs to the $I$th component of the signature tensor $S(\omega)_{[0,t]}$,
\begin{equation}
\label{eq:sig}
S^I(\omega)_{[0,t]} = \underbrace{\int\cdots\int}_{\substack{u_1<\cdots<u_n \\ u_i\in [0,t]}}\text{d}\omega^{i_1}_{u_1} \cdots \text{d}\omega^{i_n}_{u_n}.
\end{equation}
It is evident from \eqref{eqn:CDE_sig_series} that any linear readout of $h_t$ is expressed as a series in signature terms. Consequently, such systems are inherently limited to learning functions that are close to these (uniformly convergent) series. 
Maximal expressivity is thus achieved when any finite linear combination in signature terms can be approximated by a linear readout on $h_t$ through appropriate choices of the matrices $A^i$.

\begin{definition}
    Fix a set of paths $\mathcal{X} \subseteq C^{1-var}([0, 1]; \mathbb{R}^{d})$. We say that a sequence $(\mathcal{A}_N, \mathcal{H}_N)_{N \in \mathbb{N}}$, where $\mathcal{H}_N \subseteq \mathbb{R}^N$ and $\mathcal{A}_N \subseteq \mathbb{R}^{N \times N}$, achieves maximal expressivity for $\mathcal{X}$ whenever for any positive tolerance $\epsilon > 0$ and any finite linear combination coefficients $\alpha \in T(\mathbb{R}^{d})$, there exists a choice of parameters $v, (A^i), h_0$ in some $\mathbb{R}^N, \mathcal{A}_N, \mathcal{H}_N$ in the sequence, such that $v^\top \mathbf{h}((A^i), h_0, \omega)_\cdot$ is uniformly close to $\langle \alpha, S(\omega)_{[0,\cdot]} \rangle$ up to an error of $\epsilon$,
    \begin{equation*}
    \begin{aligned}
        \forall \epsilon > 0, ~ 
        \forall \alpha \in T(\mathbb{R}^{d}), ~ 
        \exists N \geq 0, ~
        &\exists (v, (A^i), h_0) \in \mathbb{R}^N \times \mathcal{A}_N^d \times \mathcal{H}_N ~
        \text{s.t.} \\
        &\sup_{(\omega, t) \in \mathcal{X} \times [0,1]} | \langle \alpha, S(\omega)_{[0,t]} \rangle  - v^\top \mathbf{h}((A^i), h_0, \omega)_t  | < \epsilon.
    \end{aligned}
    \end{equation*}

    If we are given a sequence of probabilities $\mathbb{P}_N$ on $\mathcal{A}_N^d \times \mathcal{H}_N$ such that $\forall \epsilon > 0, ~ 
        \forall \alpha \in T(\mathbb{R}^{d})$, it holds that 
    \begin{equation}
        \lim\limits_{N \to \infty} \mathbb{P}_N\left\{
        \exists v \in \mathbb{R}^N ~
        \text{s.t.} 
        \sup_{(\omega, t) \in \mathcal{X} \times [0,1]} | \langle \alpha, S(\omega)_{[0,t]} \rangle  - v^\top \mathbf{h}((A^i), h_0, \omega)_t  | < \epsilon 
        \right\} = 1,
    \end{equation}
    then we say that $(\mathcal{A}_N, \mathcal{H}_N, \mathbb{P}_N)_{N \in \mathbb{N}}$ achieves maximal probabilistic expressivity for $\mathcal{X}$.
\end{definition}

A deterministic argument by \cite{kidger2022neuraldifferentialequations} demonstrates the existence of a specific choice of $\mathcal{A}_N, \mathcal{H}_N$ that mimics the algebraic structure of tensors and provides maximal expressivity for compact sets of paths. 
Furthermore, \citet[Theorem B.13]{cirone2024theoretical} established that matrices (almost) replicating the algebraic structure of tensors are, in fact, abundant. 
They showed that the triplet $(\mathbb{R}^{N \times N}, \mathbb{R}^{N}, \mathbb{P}_N)$, where $\mathbb{P}_N$ is a Gaussian measure achieves maximal probabilistic expressivity for compact sets.

The result in \cite{cirone2024theoretical} implies that for dense matrices $A^i$, if the hidden dimension $N$ is sufficiently large, there is a significant abundance of parameters capable of achieving uniformly and arbitrarily low error rates. 
These parameters should therefore be readily discoverable through standard optimisation methods. Unfortunately, as discussed in Section \ref{sec:LNCDEs}, using dense matrices is infeasible in practice due to computational constraints. 
In this section, we present three alternative choices of parameters that lead to maximal probabilistic expressivity for compact sets.
These alternatives offer better computational properties compared to the naive use of dense matrices.

\subsection{Sparse Matrices}
\label{app:expressivity_sparse}

\begin{proposition}
    The sequence of triplets $(\mathbb{R}^{N \times N}, \mathbb{R}^{N}, \mathbb{P}_N)$ where $\mathbb{P}_N$ is such that 
    \begin{itemize}
        \item the initial value has independent standard Gaussian entries $[h_0]_\alpha \iid \mathcal{N}(0,1)$,
        \item the weight matrices are distributed as $A^i \iid \frac{1}{\sqrt{N p_N}} W \odot B$ with $W$ and $B$ independent matrices having entries $[W]_{\alpha, \beta} \iid \mathcal{N}(0,1)$ and $[B]_{\alpha, \beta} \iid Ber(p_N)$,
        \item the sparsity parameter $p_N$ satisfies $N p_N \to \infty$ as $N \to \infty$,
    \end{itemize}
    achieves maximal probabilistic expressivity for compact sets.
\end{proposition}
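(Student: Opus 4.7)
The plan is to mirror the proof of \cite[Theorem B.13]{cirone2024theoretical}, which establishes the analogous statement for dense Gaussian matrices, and to verify that each step survives the introduction of a Bernoulli mask under the hypothesis $N p_N \to \infty$. The key structural observation is that the scaling $1/\sqrt{N p_N}$ is precisely what preserves the per-entry variance of $A^i$ at $1/N$, identical to the dense case; the task is therefore to control the additional fluctuations introduced by sparsity.

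First, invoke the signature expansion \eqref{eqn:CDE_sig_series} to rewrite
\[
v^\top \mathbf{h}((A^i), h_0, \omega)_t = \sum_{I \in \mathbb{W}_{d_\omega}} v^\top (A^I h_0)\, S^I(\omega)_{[0,t]}.
\]
For a target functional $\alpha \in T(\mathbb{R}^d)$ supported on words of length at most $n$, it then suffices to find $v \in \mathbb{R}^N$ with $v^\top (A^I h_0) \approx \alpha_I$ for all $|I| \le n$, together with a uniform tail bound on $\sum_{|I| > n} v^\top (A^I h_0)\, S^I(\omega)_{[0,t]}$. The first requirement is an overdetermined linear system in $v$, and, following \cite{cirone2024theoretical}, reduces to showing that the Gram matrix $G_N = \bigl(\langle A^I h_0, A^J h_0\rangle\bigr)_{|I|,|J|\le n}$ is uniformly well-conditioned with probability tending to one.

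The main technical content is therefore a moment computation: expanding $\mathbb{E}[\langle A^I h_0, A^J h_0 \rangle]$ into a sum over Wick contractions among the Gaussian factors of the $W$'s in the product, each contraction modulated by the matching pattern of the Bernoulli masks. Under the scaling $1/\sqrt{Np_N}$, the "fully paired" terms, which are exactly the ones dominating in the dense Wick calculus, remain of order $O(1)$, while any term in which distinct Bernoulli indices are forced to coincide picks up an additional factor $(Np_N)^{-k}$ for some $k\ge 1$ and vanishes in the limit. This reproduces the limiting Gram structure of the dense case, whose invertibility argument then carries over verbatim. Concentration around this limit follows from standard moment bounds for products of independent sparse random matrices, again crucially relying on $Np_N \to \infty$ to suppress higher cumulants.

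Finally, the tail of the signature expansion is controlled uniformly over compact sets of paths by combining the factorial decay $\|S^I(\omega)_{[0,t]}\| \le \|\omega\|_{1\text{-var}}^{|I|}/|I|!$ with an operator-norm bound $\|A^i\|_{\mathrm{op}} = O(1)$ with high probability, which is a standard concentration result for sparse Gaussian matrices in the regime $Np_N \to \infty$. The hard part, I expect, is the moment bookkeeping in step three: carefully enumerating the mixed Gaussian--Bernoulli contractions and verifying that every term not already present in the dense calculation is suppressed by a positive power of $(Np_N)^{-1}$, uniformly in the (fixed) word lengths $|I|, |J| \le n$. Once that is done, the rest of the argument is a direct transcription of the dense-matrix proof.
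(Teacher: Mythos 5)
Your overall route is the same as the paper's: reduce everything to showing that the normalised Gram entries $\frac{1}{N}\sprod{A_I h_0}{A_J h_0}$ concentrate around $\delta_{I,J}$ in $L^2(\mathbb{P}_N)$ (this is exactly the reduction the paper takes from \citet[Section B.3.5]{cirone2024theoretical}), and then argue via a Wick/graph expansion that the Bernoulli mask, under the variance-preserving scaling $1/\sqrt{Np_N}$, only contributes terms in which merged indices trade a factor $p_N^{1-m}$ from the Bernoulli moments against a factor $N^{-(m-1)}$ from the lost summation indices, i.e.\ corrections suppressed by powers of $(Np_N)^{-1}$. The paper outsources precisely this bookkeeping to \citet[Section 6.2]{cirone2024graphexpansionsdeepneural}, where the correction is shown to be $\mathcal{O}_{I,J}(1/\sqrt{N})$ with constants controlled by the number of pairings of $I \cup J \cup I \cup J$; your step three is a sketch of that same computation.

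The one step that does not survive scrutiny is your tail control. You claim $\norm{A^i}_{\mathrm{op}} = \mathcal{O}(1)$ with high probability is ``standard'' for sparse Gaussian matrices whenever $Np_N \to \infty$. It is not: the operator norm is bounded below by the largest column norm, and when $Np_N = o(\log N)$ the maximal number of nonzeros in a column fluctuates to order $\log N / \log(\log N / (Np_N))$, so $\max_j \norm{A^i e_j}_2 \to \infty$ and the operator norm diverges. The standard $\mathcal{O}(1)$ bound (Bandeira--van Handel type) requires $Np_N \gtrsim \log N$, which is strictly stronger than the hypothesis of the proposition. This is fixable without strengthening the hypothesis: the framework of \citet[Section B.3.5]{cirone2024theoretical} does not control the tail via operator norms at all, but by taking the readout of the form $v = \frac{1}{N}\sum_{|I| \le n} \alpha_I A_I h_0$ and bounding the entire series, including words $|J| > n$, through the same $L^2$ Gram estimates $\norm{\frac{1}{N}\sprod{A_I h_0}{A_J h_0} - \delta_{I,J}}_{L^2(\mathbb{P}_N)} \le (\kappa(|I|+|J|))!!\,\mathcal{O}(1/\sqrt{N})$ summed against the factorial decay of the signature. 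You should route the tail through those bounds rather than through $\norm{A^i}_{\mathrm{op}}$; with that substitution your argument matches the paper's.
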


\begin{proof}
    Following \citet[Section B.3.5]{cirone2024theoretical}, we only need to prove a bound of type
    \begin{equation}
        \norm{\frac{1}{N} \sprod{A_I h_0}{A_Jh_0}_{\mathbb{R}^N} - \delta_{I,J}}_{L^2(\mathbb{P}_N)} \leq ( \kappa (|I| + |J|) )!! ~ \mathcal{O}\left(\frac{1}{\sqrt{N}}\right)
    \end{equation}
    as in the dense Gaussian case. 
    That such sparse matrices present the same bounds as dense Gaussian ones follows from \cite[Section 6.2]{cirone2024graphexpansionsdeepneural}, where it is shown that the bounds can only differ by a correction term of order $\mathcal{O}_{I,J}(\frac{1}{\sqrt{N}})$ where the constants are bounded by the number of pairings of $I \cup J \cup I \cup J$.
\end{proof}

\begin{remark}
    Following \citet[Section 6.1]{cirone2024graphexpansionsdeepneural}, it is possible to prove that $W$ can be taken as having i.i.d.\ entries from a centred, symmetric but heavy tailed distribution given finiteness of even moments.
\end{remark}

\subsection{Walsh--Hadamard Matrices}
\label{app:expressivity_wh}

\begin{proposition}
    The sequence of triplets $(\mathcal{A}_N, \mathbb{R}^{N}, \mathbb{P}_N)$ where $\mathcal{A}_N$ and $\mathbb{P}_N$ are such that 
    \begin{itemize}
        \item $\mathcal{A}_N := \{ W \text{diag}(\Delta) : \Delta \in \mathbb{R}^N, ~ W\in \mathbb{R}^{N \times N}, ~WW^\top = I_N\}$,
        \item the initial value has independent standard Gaussian entries $[h_0]_\alpha \iid \mathcal{N}(0,1)$,
        \item the weight matrices are distributed as $A^i \iid \frac{1}{\sqrt{N}} H \text{diag}(\Delta)$ for a fixed $H \in \mathbb{R}^{N \times N}$ satisfying $HH^\top = N I_N$ and having entries bounded uniformly in $N$ by a constant $C$, and $\Delta \in \mathbb{R}^N$ having entries $[\Delta]_{\alpha} \iid \mathcal{N}(0,1)$,
    \end{itemize}
    achieves maximal probabilistic expressivity for compact sets.
    In particular one can choose $H$ to be a Walsh--Hadamard matrix of order $N$ for computational efficiency.
\end{proposition}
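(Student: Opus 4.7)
The plan I would follow mirrors the sparse matrix case above: following \cite[Section B.3.5]{cirone2024theoretical}, maximal probabilistic expressivity reduces to showing the $L^2$ concentration bound
\begin{equation*}
    \norm{\tfrac{1}{N} \sprod{A_I h_0}{A_J h_0}_{\mathbb{R}^N} - \delta_{I,J}}_{L^2(\mathbb{P}_N)} \leq \bigl(\kappa(|I|+|J|)\bigr)!! \cdot \mathcal{O}(N^{-1/2}),
\end{equation*}
uniformly over words $I,J$, with $A_I := A^{i_{|I|}} \cdots A^{i_1}$. The task is then to verify this moment bound under the Walsh--Hadamard parametrisation, noting that only the dense Gaussian input to the pairing calculus has been replaced; the reduction itself is unchanged.

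First, I would expand $A_I$ using $A^i = N^{-1/2} H \text{diag}(\Delta^i)$, so that for $I = i_1 \cdots i_n$,
\begin{equation*}
    (A_I h_0)_\alpha = N^{-n/2} \sum_{\gamma_0, \ldots, \gamma_{n-1}} H_{\alpha \gamma_{n-1}} \Delta^{i_n}_{\gamma_{n-1}} H_{\gamma_{n-1}\gamma_{n-2}} \Delta^{i_{n-1}}_{\gamma_{n-2}} \cdots H_{\gamma_1 \gamma_0} \Delta^{i_1}_{\gamma_0} [h_0]_{\gamma_0}.
\end{equation*}
Writing $\tfrac{1}{N}\sprod{A_I h_0}{A_J h_0}$ as a sum over two such index chains and applying Isserlis's theorem (all randomness being jointly Gaussian in the $\Delta^i$ and $h_0$) expresses both its mean and its second moment as sums over Wick pairings of the Gaussian factors; each pairing imposes Kronecker constraints that contract subsets of the $H$-indexed sums.

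Second, I would classify pairings as \emph{matched} or \emph{unmatched}. A matched pairing is one whose Kronecker contractions collapse neighbouring $H$-sums into complete orthogonality sums $\sum_\beta H_{\alpha\beta} H_{\gamma \beta} = N \delta_{\alpha\gamma}$; summed over all matched pairings these exactly reproduce the dense-Gaussian mean and yield the leading $\delta_{I,J}$. For unmatched pairings, each $H$-chain that fails to close into an orthogonality sum must be bounded using only the flatness estimate $|H_{\alpha\beta}| \leq C$; doing so leaves one index summed trivially rather than in a full trace of size $N$, costing exactly $N^{-1/2}$ against the normalising factor. Counting total pairings by the double factorial $(|I|+|J|)!!$ and summing gives the claimed bound on both the bias and the variance, and hence on the $L^2$ norm.

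The main obstacle is the combinatorial bookkeeping in the second step: unlike the dense Gaussian case, where all $N^2$ entries of each $A^i$ are independent and the two indices of every matrix behave symmetrically, here the randomness enters only through the $N$-dimensional $\Delta^i$, while the coupling between the input and output indices is mediated by the deterministic $H$. Consequently one must show that any unmatched pairing genuinely leaves at least one unclosed $H$-chain, and that the number of such unclosed chains grows with the ``distance'' of the pairing from a matched one, so that the gains from flatness accumulate rather than being cancelled by combinatorial multiplicities. Once this quantitative version of the matched/unmatched dichotomy is established, the argument closes in exactly the same way as the sparse case, by appealing to the graph-expansion bookkeeping of \cite{cirone2024graphexpansionsdeepneural} with flat orthogonal contractions in place of independent Gaussian ones.
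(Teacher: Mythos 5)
Your proposal takes essentially the same route as the paper: reduce to the $L^2$ bound of \cite[Section B.3.5]{cirone2024theoretical}, expand $\tfrac{1}{N}\sprod{A_I h_0}{A_J h_0}$ over Wick pairings of the $\Delta$'s and $h_0$, let matched pairings collapse via $HH^\top = N I_N$ to produce $\delta_{I,J}$, suppress the rest using $|H_{\alpha\beta}|\leq C$, and count pairings by a double factorial within the graph-expansion framework of \cite{cirone2024graphexpansionsdeepneural}. The combinatorial obstacle you flag --- showing every non-identity pairing loses at least one factor of $N$ --- is exactly the step the paper resolves with its ``middle-out'' induction (repeatedly excising the $HH^\top$ sub-graph around each unpaired middle vertex) and, for the variance, the observation that a pairing of $G_{I,J}\sqcup G_{I,J}$ attaining the maximal vertex count would have to restrict to the identity pairing on each copy and hence fail to be atom-free.
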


\begin{proof} 
    Following \citet[Section B.3.5]{cirone2024theoretical}, we only need to prove a bound of type
    \begin{equation}
        \norm{\frac{1}{N} \sprod{A_I h_0}{A_Jh_0}_{\mathbb{R}^N} - \delta_{I,J}}_{L^2(\mathbb{P}_N)} \leq ( \kappa (|I| + |J|) )!! ~ \mathcal{O}\left(\frac{1}{\sqrt{N}}\right).
    \end{equation}
    We will place ourselves in the graphical setting of \cite{cirone2024graphexpansionsdeepneural} and leverage the fact that (\cite[Section 7.1]{cirone2024graphexpansionsdeepneural}) their results and techniques hold even when the vertices are fixed to random vectors.

    The first step is to notice that for $x \in \mathbb{R}^N$ one has the equivalence $W\text{diag}(\Delta) \cdot x = W \cdot (\Delta \odot x)$ which can be represented graphically as in Figure \ref{fig:W_delta_tree}.
    \begin{figure}[h!]
        \centering
        \tikzset{every picture/.style={line width=0.75pt}} 

\begin{tikzpicture}[x=0.75pt,y=0.75pt,yscale=-1,xscale=1]

\draw    (297.98,141.7) -- (297.98,108.42) ;
\draw [shift={(297.98,122.66)}, rotate = 90] [color={rgb, 255:red, 0; green, 0; blue, 0 }  ][line width=0.75]    (4.37,-1.32) .. controls (2.78,-0.56) and (1.32,-0.12) .. (0,0) .. controls (1.32,0.12) and (2.78,0.56) .. (4.37,1.32)   ;
\draw    (297.98,108.42) -- (297.98,78.51) ;
\draw [shift={(297.98,91.07)}, rotate = 90] [color={rgb, 255:red, 0; green, 0; blue, 0 }  ][line width=0.75]    (4.37,-1.32) .. controls (2.78,-0.56) and (1.32,-0.12) .. (0,0) .. controls (1.32,0.12) and (2.78,0.56) .. (4.37,1.32)   ;
\draw  [color={rgb, 255:red, 0; green, 0; blue, 0 }  ,draw opacity=1 ][fill={rgb, 255:red, 0; green, 0; blue, 0 }  ,fill opacity=1 ] (300.96,111.61) .. controls (302.71,109.97) and (302.81,107.21) .. (301.17,105.45) .. controls (299.53,103.7) and (296.77,103.6) .. (295.01,105.24) .. controls (293.26,106.88) and (293.16,109.64) .. (294.8,111.4) .. controls (296.44,113.15) and (299.2,113.25) .. (300.96,111.61) -- cycle ;
\draw  [color={rgb, 255:red, 0; green, 0; blue, 0 }  ,draw opacity=1 ][fill={rgb, 255:red, 255; green, 255; blue, 255 }  ,fill opacity=1 ][line width=1.5]  (300.96,81.69) .. controls (302.71,80.05) and (302.81,77.29) .. (301.17,75.54) .. controls (299.53,73.78) and (296.77,73.68) .. (295.01,75.33) .. controls (293.26,76.97) and (293.16,79.72) .. (294.8,81.48) .. controls (296.44,83.24) and (299.2,83.33) .. (300.96,81.69) -- cycle ;
\draw  [color={rgb, 255:red, 74; green, 144; blue, 226 }  ,draw opacity=1 ][fill={rgb, 255:red, 255; green, 255; blue, 255 }  ,fill opacity=1 ][line width=1.5]  (300.96,144.88) .. controls (302.71,143.24) and (302.81,140.48) .. (301.17,138.72) .. controls (299.53,136.97) and (296.77,136.87) .. (295.01,138.51) .. controls (293.26,140.15) and (293.16,142.91) .. (294.8,144.67) .. controls (296.44,146.43) and (299.2,146.52) .. (300.96,144.88) -- cycle ;
\draw    (389.13,142.55) -- (389.13,109.28) ;
\draw [shift={(389.13,123.52)}, rotate = 90] [color={rgb, 255:red, 0; green, 0; blue, 0 }  ][line width=0.75]    (4.37,-1.32) .. controls (2.78,-0.56) and (1.32,-0.12) .. (0,0) .. controls (1.32,0.12) and (2.78,0.56) .. (4.37,1.32)   ;
\draw    (389.13,109.28) -- (389.13,79.37) ;
\draw [shift={(389.13,95.52)}, rotate = 90] [color={rgb, 255:red, 0; green, 0; blue, 0 }  ][line width=0.75]    (0,2.24) -- (0,-2.24)(-2.01,2.24) -- (-2.01,-2.24)   ;
\draw  [color={rgb, 255:red, 0; green, 0; blue, 0 }  ,draw opacity=1 ][fill={rgb, 255:red, 0; green, 0; blue, 0 }  ,fill opacity=1 ] (392.1,112.46) .. controls (393.86,110.82) and (393.95,108.07) .. (392.31,106.31) .. controls (390.67,104.55) and (387.91,104.46) .. (386.16,106.1) .. controls (384.4,107.74) and (384.3,110.5) .. (385.94,112.25) .. controls (387.59,114.01) and (390.34,114.11) .. (392.1,112.46) -- cycle ;
\draw  [color={rgb, 255:red, 0; green, 0; blue, 0 }  ,draw opacity=1 ][fill={rgb, 255:red, 255; green, 255; blue, 255 }  ,fill opacity=1 ][line width=1.5]  (392.1,82.55) .. controls (393.86,80.91) and (393.95,78.15) .. (392.31,76.39) .. controls (390.67,74.64) and (387.91,74.54) .. (386.16,76.18) .. controls (384.4,77.82) and (384.3,80.58) .. (385.94,82.34) .. controls (387.59,84.09) and (390.34,84.19) .. (392.1,82.55) -- cycle ;
\draw  [color={rgb, 255:red, 74; green, 144; blue, 226 }  ,draw opacity=1 ][fill={rgb, 255:red, 255; green, 255; blue, 255 }  ,fill opacity=1 ][line width=1.5]  (392.1,145.74) .. controls (393.86,144.1) and (393.95,141.34) .. (392.31,139.58) .. controls (390.67,137.82) and (387.91,137.73) .. (386.16,139.37) .. controls (384.4,141.01) and (384.3,143.77) .. (385.94,145.52) .. controls (387.59,147.28) and (390.34,147.38) .. (392.1,145.74) -- cycle ;

\draw (303.21,122.17) node [anchor=north west][inner sep=0.75pt]  [font=\tiny,rotate=-0.89]  {$W$};
\draw (303.86,89.6) node [anchor=north west][inner sep=0.75pt]  [font=\tiny,rotate=-0.89]  {$\text{diag}( \Delta )$};
\draw (340.48,108.59) node [anchor=north west][inner sep=0.75pt]    {$\equiv $};
\draw (394.35,123.03) node [anchor=north west][inner sep=0.75pt]  [font=\tiny,rotate=-0.89]  {$W$};
\draw (395.21,105.89) node [anchor=north west][inner sep=0.75pt]  [font=\tiny,rotate=-0.89]  {$\Delta $};

\end{tikzpicture}
        \caption{\textbf{Graphical representations of the matrix $\boldsymbol{W}\text{diag}\boldsymbol{(\Delta)}$}. Here \textdoublebarpipe\ edges correspond to identity matrices.}
        \label{fig:W_delta_tree}
    \end{figure}

    This leads to the product graph representation $G_{I,J}$ for $\frac{1}{N} \sprod{A_I h_0}{A_Jh_0}_{\mathbb{R}^N}$, where $I = i_1 \dots i_n$ and $J = j_1 \dots j_m$, given in Figure \ref{fig:dot_prod_A_I}, which allows us to use \cite[Proposition 1]{cirone2024graphexpansionsdeepneural} to obtain the bounds. 

    \begin{figure}[h!]
        \centering
        \tikzset{every picture/.style={line width=0.75pt}} 

\begin{tikzpicture}[x=0.75pt,y=0.75pt,yscale=-1,xscale=1]

\draw    (378.2,89.08) -- (348.2,89.15) ;
\draw [shift={(360.8,89.12)}, rotate = 359.87] [color={rgb, 255:red, 0; green, 0; blue, 0 }  ][line width=0.75]    (4.37,-1.32) .. controls (2.78,-0.56) and (1.32,-0.12) .. (0,0) .. controls (1.32,0.12) and (2.78,0.56) .. (4.37,1.32)   ;
\draw  [color={rgb, 255:red, 0; green, 0; blue, 0 }  ,draw opacity=1 ][fill={rgb, 255:red, 0; green, 0; blue, 0 }  ,fill opacity=1 ] (351.17,92.34) .. controls (352.93,90.7) and (353.02,87.94) .. (351.38,86.18) .. controls (349.74,84.42) and (346.99,84.33) .. (345.23,85.97) .. controls (343.47,87.61) and (343.38,90.37) .. (345.02,92.12) .. controls (346.66,93.88) and (349.41,93.98) .. (351.17,92.34) -- cycle ;
\draw  [dash pattern={on 0.84pt off 2.51pt}]  (408.31,89.08) -- (378.2,89.08) ;
\draw    (438.42,89.08) -- (408.31,89.08) ;
\draw [shift={(420.97,89.08)}, rotate = 360] [color={rgb, 255:red, 0; green, 0; blue, 0 }  ][line width=0.75]    (4.37,-1.32) .. controls (2.78,-0.56) and (1.32,-0.12) .. (0,0) .. controls (1.32,0.12) and (2.78,0.56) .. (4.37,1.32)   ;
\draw    (468.53,89.08) -- (438.42,89.08) ;
\draw [shift={(456.88,89.08)}, rotate = 180] [color={rgb, 255:red, 0; green, 0; blue, 0 }  ][line width=0.75]    (4.37,-1.32) .. controls (2.78,-0.56) and (1.32,-0.12) .. (0,0) .. controls (1.32,0.12) and (2.78,0.56) .. (4.37,1.32)   ;
\draw  [dash pattern={on 0.84pt off 2.51pt}]  (498.09,89.08) -- (468.53,89.08) ;
\draw    (528.2,89.08) -- (498.09,89.08) ;
\draw [shift={(516.54,89.08)}, rotate = 180] [color={rgb, 255:red, 0; green, 0; blue, 0 }  ][line width=0.75]    (4.37,-1.32) .. controls (2.78,-0.56) and (1.32,-0.12) .. (0,0) .. controls (1.32,0.12) and (2.78,0.56) .. (4.37,1.32)   ;
\draw  [color={rgb, 255:red, 0; green, 0; blue, 0 }  ,draw opacity=1 ][fill={rgb, 255:red, 0; green, 0; blue, 0 }  ,fill opacity=1 ] (381.17,92.27) .. controls (382.93,90.63) and (383.02,87.87) .. (381.38,86.11) .. controls (379.74,84.35) and (376.99,84.26) .. (375.23,85.9) .. controls (373.47,87.54) and (373.38,90.3) .. (375.02,92.06) .. controls (376.66,93.81) and (379.41,93.91) .. (381.17,92.27) -- cycle ;
\draw  [color={rgb, 255:red, 0; green, 0; blue, 0 }  ,draw opacity=1 ][fill={rgb, 255:red, 0; green, 0; blue, 0 }  ,fill opacity=1 ] (411.28,92.27) .. controls (413.04,90.63) and (413.14,87.87) .. (411.49,86.11) .. controls (409.85,84.35) and (407.1,84.26) .. (405.34,85.9) .. controls (403.58,87.54) and (403.49,90.3) .. (405.13,92.06) .. controls (406.77,93.81) and (409.52,93.91) .. (411.28,92.27) -- cycle ;
\draw  [color={rgb, 255:red, 0; green, 0; blue, 0 }  ,draw opacity=1 ][fill={rgb, 255:red, 0; green, 0; blue, 0 }  ,fill opacity=1 ] (441.39,92.27) .. controls (443.15,90.63) and (443.25,87.87) .. (441.61,86.11) .. controls (439.96,84.35) and (437.21,84.26) .. (435.45,85.9) .. controls (433.69,87.54) and (433.6,90.3) .. (435.24,92.06) .. controls (436.88,93.81) and (439.64,93.91) .. (441.39,92.27) -- cycle ;
\draw  [color={rgb, 255:red, 0; green, 0; blue, 0 }  ,draw opacity=1 ][fill={rgb, 255:red, 0; green, 0; blue, 0 }  ,fill opacity=1 ] (471.5,92.27) .. controls (473.26,90.63) and (473.36,87.87) .. (471.72,86.11) .. controls (470.08,84.35) and (467.32,84.26) .. (465.56,85.9) .. controls (463.8,87.54) and (463.71,90.3) .. (465.35,92.06) .. controls (466.99,93.81) and (469.75,93.91) .. (471.5,92.27) -- cycle ;
\draw  [color={rgb, 255:red, 0; green, 0; blue, 0 }  ,draw opacity=1 ][fill={rgb, 255:red, 0; green, 0; blue, 0 }  ,fill opacity=1 ] (501.06,92.27) .. controls (502.82,90.63) and (502.91,87.87) .. (501.27,86.11) .. controls (499.63,84.35) and (496.88,84.26) .. (495.12,85.9) .. controls (493.36,87.54) and (493.26,90.3) .. (494.91,92.06) .. controls (496.55,93.81) and (499.3,93.91) .. (501.06,92.27) -- cycle ;
\draw  [color={rgb, 255:red, 0; green, 0; blue, 0 }  ,draw opacity=1 ][fill={rgb, 255:red, 0; green, 0; blue, 0 }  ,fill opacity=1 ] (531.17,92.27) .. controls (532.93,90.63) and (533.02,87.87) .. (531.38,86.11) .. controls (529.74,84.35) and (526.99,84.26) .. (525.23,85.9) .. controls (523.47,87.54) and (523.38,90.3) .. (525.02,92.06) .. controls (526.66,93.81) and (529.41,93.91) .. (531.17,92.27) -- cycle ;
\draw    (348.2,89.15) -- (318.53,89.15) ;
\draw [shift={(334.57,89.15)}, rotate = 360] [color={rgb, 255:red, 0; green, 0; blue, 0 }  ][line width=0.75]    (0,2.24) -- (0,-2.24)(-2.01,2.24) -- (-2.01,-2.24)   ;
\draw  [color={rgb, 255:red, 0; green, 0; blue, 0 }  ,draw opacity=1 ][fill={rgb, 255:red, 0; green, 0; blue, 0 }  ,fill opacity=1 ] (321.5,92.34) .. controls (323.26,90.7) and (323.36,87.94) .. (321.72,86.18) .. controls (320.08,84.42) and (317.32,84.33) .. (315.56,85.97) .. controls (313.8,87.61) and (313.71,90.37) .. (315.35,92.12) .. controls (316.99,93.88) and (319.75,93.98) .. (321.5,92.34) -- cycle ;
\draw    (557.87,89.08) -- (528.2,89.08) ;
\draw [shift={(544.23,89.08)}, rotate = 360] [color={rgb, 255:red, 0; green, 0; blue, 0 }  ][line width=0.75]    (0,2.24) -- (0,-2.24)(-2.01,2.24) -- (-2.01,-2.24)   ;
\draw  [color={rgb, 255:red, 0; green, 0; blue, 0 }  ,draw opacity=1 ][fill={rgb, 255:red, 0; green, 0; blue, 0 }  ,fill opacity=1 ] (560.84,92.27) .. controls (562.6,90.63) and (562.69,87.87) .. (561.05,86.11) .. controls (559.41,84.35) and (556.65,84.26) .. (554.9,85.9) .. controls (553.14,87.54) and (553.04,90.3) .. (554.68,92.06) .. controls (556.32,93.81) and (559.08,93.91) .. (560.84,92.27) -- cycle ;

\draw (120, 80) node [anchor=north west][inner sep=0.75pt]    {$\frac{1}{N} \langle A_{I} h_{0} ,\ A_{J} h_{0} \ \rangle \ \ \ \equiv \ \ \frac{1}{N}  \frac{1}{N^{\frac{n+m}{2}}} \ $};
\draw (417.54,95.33) node [anchor=north west][inner sep=0.75pt]  [font=\tiny,rotate=-0.89]  {$H$};
\draw (562.81,92.51) node [anchor=north west][inner sep=0.75pt]  [font=\tiny,rotate=-0.89]  {$h_{0}{}{}$};
\draw (448.88,96.44) node [anchor=north west][inner sep=0.75pt]  [font=\tiny,rotate=-0.89]  {$H$};
\draw (507.99,95.11) node [anchor=north west][inner sep=0.75pt]  [font=\tiny,rotate=-0.89]  {$H$};
\draw (358.65,95.99) node [anchor=north west][inner sep=0.75pt]  [font=\tiny,rotate=-0.89]  {$H$};
\draw (344.9,74) node [anchor=north west][inner sep=0.75pt]  [font=\tiny,rotate=-0.89]  {$\Delta _{i_{n}}$};
\draw (305.15,91.63) node [anchor=north west][inner sep=0.75pt]  [font=\tiny,rotate=-0.89]  {$h_{0}{}{}$};
\draw (373.9,74) node [anchor=north west][inner sep=0.75pt]  [font=\tiny,rotate=-0.89]  {$\Delta _{i_{n-1}}$};
\draw (404.9,74) node [anchor=north west][inner sep=0.75pt]  [font=\tiny,rotate=-0.89]  {$\Delta _{i_{1}}{}$};
\draw (464.92,74) node [anchor=north west][inner sep=0.75pt]  [font=\tiny,rotate=-0.89]  {$\Delta _{j_{1}}$};
\draw (490.9,74) node [anchor=north west][inner sep=0.75pt]  [font=\tiny,rotate=-0.89]  {$\Delta _{j_{m-1}}$};
\draw (525.4,74) node [anchor=north west][inner sep=0.75pt]  [font=\tiny,rotate=-0.89]  {$\Delta _{j_{m}}$};

\end{tikzpicture}
        \caption{\textbf{The product $\boldsymbol{\frac{1}{N} \sprod{A_I h_0}{A_Jh_0}_{\mathbb{R}^N}}$ as a product graph G.}}
        \label{fig:dot_prod_A_I}
    \end{figure}

    In the present setting, the vertices labelled \(h_0\) must be identified, and the remaining decorated vertices must be paired such that \(\Delta_a\) is identified with \(\Delta_b\) only if \(a = b\). Each of these admissible pairings $\phi$ produces a graph $(G_{I,J})_\phi$ in which each vertex is assigned the vector of ones.
    Note that since $|[H]_{\alpha, \beta} | \leq C$, one has $| H^{\odot k} | \leq C^k$.
    The procedure is shown in Figure \ref{fig:graph_pairings} for all pairings when $I = J = 11$.


    \begin{figure}[h!]
        \centering
        \input{Figures/graph_pairings}
        \caption{\textbf{Construction of $\boldsymbol{(G_{I,J})_\phi}$.}
         Here, we display all the pairings, represented by the red dashed lines, for \(I = J = 11\) along with their intermediate stages. For simplicity, we omit the \(H\) labels from edges with arrows. 
         }
        \label{fig:graph_pairings}
    \end{figure}

    Under these boundedness assumptions, note how leading-order pairings of $G_{I,J}$ are the ones having the maximum number of surviving vertices, meaning having $\frac{|I|+|J|}{2} + 1$ vertices. 
    This can happen iff $I = J$, and even in this case, there exists only one pairing (the middle one in Figure \ref{fig:graph_pairings}) for which it holds that $(G_{I,I})_{\phi} \equiv \frac{1}{N} \frac{1}{N^{\frac{2|I|}{2}}} N^{|I|+1} = 1$.    
    To see this, note that only the ``middle" vertex of $G_{I,J}$ is not paired, so to get $\frac{|I|+|J|}{2} + 1$, all other vertices have to be identified in couples. 
    This implies that the left and right adjacent vertices (call them $v_l$ and $v_r$) must be paired together: in fact the sub-graph comprising the middle vertex and the two adjacent edges corresponds to the matrix $HH^\top = N I_N$, hence we can remove this sub-graph, identify the adjacent vertices and take the factor $N$ in front without changing the value of the graph; 
    but then if $v_l$ and $v_r$ were paired with other vertices we would identify at least $4$ vertices.
    Hence $i_1$ has to be equal to $j_1$ and $v_l$ and $v_r$ must be identified. 
    Proceeding by induction, from the middle out, we see that only the identity pairing has non-vanishing value.
    
    To conclude we are left to prove that any pairing $\psi$ of $G_{I,J} \sqcup G_{I,J}$ not inducing the identity one on any of the two copies produces $| \left( G_{I,J} \sqcup G_{I,J}\right)_\psi | \leq \frac{1}{N}$, since the number of these pairings is less than the total number of pairings of the $\Delta$-labelled vertices of $G_{I,J} \sqcup G_{I,J}$, which is $(2(|I| + |J|)!!$. 
    Then the inequality would hold with $\kappa = 2C^4$,
    since from the definition of product-graph, we see that the bound 
    \begin{equation}
        | \left( G_{I,J} \sqcup G_{I,J}\right)_\psi | \leq \frac{1}{N^2} \frac{1}{N^{2\frac{|I| + |J|}{2}}} N^{V_\psi} C^{2(|I|+|J|)}
    \end{equation}
    must hold, where $V_\psi$ is the number of vertices in $\left( G_{I,J} \sqcup G_{I,J}\right)_\psi$. 
    Thus it suffices to show $V_\psi \leq |I| + |J| + 1$. 

    To see this, notice that in any case $V_\psi \leq |I| + |J| + 2$ and that $V_\psi = |I| + |J| + 2$ iff $\psi$ identifies the random vertices in pairs. 
    Once again there are special vertices which are not paired, the ``middle" ones, the sub-graphs containing them correspond to the matrix $HH^\top = N I_N$ and can thus be removed by identifying the adjacent vertices and taking the $N$ factor out, so these vertices must be paired between themselves, and so on. This shows that $\psi$ has to separately pair both copies of $G_{I,J}$ in $G_{I,J} \sqcup G_{I,J}$ with identity pairings, but then such a $\psi$ would not be an atom-free pairing! Hence such a $\psi$ cannot exist and $V_\psi \leq |I| + |J| + 1$ always holds.
\end{proof}


\begin{remark}
    Just as in the sparse case, and following \cite[Section 6.1]{cirone2024graphexpansionsdeepneural}, it is possible to prove that the \(\Delta\) matrices can be taken to have i.i.d.\ entries drawn from a centered, symmetric, but possibly heavy-tailed distribution, provided that the even moments are finite.
    This distributional adjustment is a useful technique for controlling the eigenvalue distribution of \(\frac{1}{\sqrt{N}} H \text{diag}(\Delta)\), ensuring it has favourable computational properties while providing a theoretical guarantee of preserving expressive power.
\end{remark}

\subsection{Block Diagonal}
\label{app:expressivity_bd}

\begin{proposition}
    The sequence of triplets $(\mathcal{A}_N, \mathbb{R}^{N}, \mathbb{P}_N)$ where $\mathcal{A}_N$ and $\mathbb{P}_N$ are such that 
    \begin{itemize}
        \item $\mathcal{A}_N := \{ \mathrm{BlockDiag}(B_{1}, B_{2}, \dots, B_{k_N}) : b_N =  \lceil \log(N)\rceil, k_N = \lceil N / b_N \rceil, B_h \in \mathbb{R}^{b_N \times b_N} \}$,
        \item the initial value has independent standard Gaussian entries $[h_0]_\alpha \iid \mathcal{N}(0,1)$,
        \item the weight matrices are distributed as $[B_i]_{\alpha, \beta} \iid \frac{1}{\sqrt{b_N}} \mathcal{N}(0,1)$,
    \end{itemize}
    achieves maximal probabilistic expressivity for compact sets.
\end{proposition}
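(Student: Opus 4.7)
The plan is to mirror the strategy used in Sections \ref{app:expressivity_sparse} and \ref{app:expressivity_wh}: reduce maximal probabilistic expressivity to an $L^2$ bound of the form
\begin{equation}
    \norm{\tfrac{1}{N} \sprod{A_I h_0}{A_Jh_0}_{\mathbb{R}^N} - \delta_{I,J}}_{L^2(\mathbb{P}_N)} \leq (\kappa(|I|+|J|))!! \cdot \varepsilon_N
\end{equation}
for every $I,J \in \mathbb{W}_{d_\omega}$, with $\varepsilon_N \to 0$ as $N \to \infty$, and then invoke \cite[Section B.3.5]{cirone2024theoretical}. The crucial observation is that block diagonal matrices act in a block-decoupled fashion, so the analysis reduces to the already known dense Gaussian case applied to each of the $k_N$ independent blocks of size $b_N$.

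Concretely, I would first partition $h_0 = (h_0^{(1)}, \ldots, h_0^{(k_N)})$ with $h_0^{(h)} \in \mathbb{R}^{b_N}$ and observe that, for every word $I = i_1 \cdots i_n$, the iterated product satisfies $[A_I h_0]^{(h)} = B_h^{i_1} B_h^{i_2} \cdots B_h^{i_n} h_0^{(h)}$, with the $\{B_h^i\}$ mutually independent across both $h$ and $i$. This yields the decomposition
\begin{equation}
    \tfrac{1}{N}\sprod{A_I h_0}{A_J h_0}_{\mathbb{R}^N} = \tfrac{1}{k_N} \sum_{h=1}^{k_N} Y_h, \qquad Y_h := \tfrac{1}{b_N}\sprod{B_h^I h_0^{(h)}}{B_h^J h_0^{(h)}}_{\mathbb{R}^{b_N}},
\end{equation}
where the $Y_h$ are i.i.d.\ across $h$. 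Applying \cite[Theorem B.13]{cirone2024theoretical} to a single dense-Gaussian block of dimension $b_N$ gives $\norm{Y_1 - \delta_{I,J}}_{L^2} \leq (\kappa(|I|+|J|))!! \cdot \mathcal{O}(b_N^{-1/2})$, and independence across $h$ then yields
\begin{equation}
    \norm{\tfrac{1}{k_N}\sum_h Y_h - \delta_{I,J}}_{L^2}^2 = \tfrac{\mathrm{Var}(Y_1)}{k_N} + \bigl(\mathbb{E}[Y_1] - \delta_{I,J}\bigr)^2 \leq (\kappa(|I|+|J|))!!^2 \cdot \mathcal{O}\bigl(b_N^{-1}\bigr).
\end{equation}
Since $b_N = \lceil \log N \rceil \to \infty$, the required $L^2$ bound holds with rate $\varepsilon_N = \mathcal{O}(1/\sqrt{\log N})$, and Chebyshev's inequality transfers this to probabilistic convergence exactly as in the preceding proofs.

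The main subtlety is that the rate obtained here is $\mathcal{O}(1/\sqrt{\log N})$ rather than the $\mathcal{O}(1/\sqrt{N})$ of the sparse and Walsh--Hadamard cases; one therefore has to check that the reduction of \cite[Section B.3.5]{cirone2024theoretical} from $L^2$ bounds to approximation of arbitrary finite signature linear combinations uses only the asymptotic vanishing of $\varepsilon_N$ together with the double-factorial growth in $|I| + |J|$, which it does. This is also where the assumption that the block size grows (here $b_N = \lceil \log N \rceil$, matching the $\max_j b_j \to \infty$ hypothesis of Theorem~\ref{thm:bdlncde}) is essential: if block sizes remained bounded, each $Y_h$ would retain non-vanishing bias against $\delta_{I,J}$ for long words and averaging across $h$ could not eliminate it. The uniformity over paths in a compact set then follows by the same Stone--Weierstrass closure argument used for the dense case, completing the proof.
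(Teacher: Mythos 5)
Your proof is correct and follows essentially the same route as the paper's: decompose $\tfrac{1}{N}\sprod{A_I h_0}{A_J h_0}$ into an average of $k_N$ independent per-block terms, apply the dense-Gaussian bound of \cite[Theorem B.13]{cirone2024theoretical} to each block of size $b_N$, and conclude via \cite[Section B.3.5]{cirone2024theoretical} with rate $\mathcal{O}(1/\sqrt{b_N})$. Your variance-plus-bias refinement using independence across blocks is slightly sharper than the paper's plain triangle-inequality averaging, but both are dominated by the per-block bias and yield the same vanishing rate, so the arguments are materially identical.
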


\begin{proof}
        Following \citet[Section B.3.5]{cirone2024theoretical}, we only need to prove a bound of type
    \begin{equation}
        \norm{\frac{1}{N} \sprod{A_I h_0}{A_Jh_0}_{\mathbb{R}^N} - \delta_{I,J}}_{L^2(\mathbb{P}_N)} \leq ( \kappa (|I| + |J|) )!! ~ \mathcal{O}\left(\frac{1}{\sqrt{N}}\right).
    \end{equation}

    It suffices to notice that 
    \[
    \frac{1}{N} \sprod{A_I h_0}{A_Jh_0}_{\mathbb{R}^N} = \frac{1}{k_N} \sum_{l = 1}^{k_N} \frac{1}{b_N} \sprod{B^I_l h_{0; l}}{B^J_l h_{0; l}}_{\mathbb{R}^{b_N}},
    \]
    where $h_{0; l} := [h_0]_{(l-1)b_N + 1, \dots, lb_N}$,
     since we then know that
    \begin{equation}
        \norm{\frac{1}{b_N} \sprod{B^I_l h_{0; l}}{B^J_l h_{0; l}}_{\mathbb{R}^{b_N}} - \delta_{I,J}}_{L^2(\mathbb{P}_N)} \leq ( \kappa (|I| + |J|) )!! ~ \mathcal{O}\left(\frac{1}{\sqrt{b_N}}\right).
    \end{equation}
    From the sum and the initial factor $\frac{1}{k_N}$ we obtain
    \begin{equation}
        \norm{\frac{1}{N} \sprod{A_I h_0}{A_Jh_0}_{\mathbb{R}^N} - \delta_{I,J}}_{L^2(\mathbb{P}_N)} \leq ( \kappa (|I| + |J|) )!! ~ \mathcal{O}\left(\frac{1}{\sqrt{b_N}}\right).
    \end{equation}

\end{proof}

\section{Log Linear Neural Controlled Differential Equations}\label{app:log_lncde}

Combining NCDEs with the Log-ODE method has been shown to produce state-of-the-art performance on a range of multivariate time series modelling benchmarks~\cite{CASTELL199513, morrill2021neural, Walker2024LogNCDE}. The same approach can be applied to Linear NCDEs (LNCDEs), and we refer to this approach as Log-LNCDEs. For a full introduction to the Log-NCDE method see \cite[Section 3.2.2]{cass2024lecture} and \cite{Walker2024LogNCDE}. Here, we briefly outline the application of the Log-ODE method to an LNCDE, assuming familiarity with the tensor product $\otimes$ and tensor algebra.

Recall the LNCDE model \eqref{eq:lncde}, 
\begin{equation}
   \label{eq:app_lncde} 
    h_t = h_{t_0} + \int_{t_0}^t \sum_{i=1}^{d_{\omega}} A^i_{\theta}h_s\text{d} \omega^{i}_s,
\end{equation}
where $A^i_{\theta}$ is the linear vector field for each channel and $\omega^{i}_s$ are the channels of our control path. The log-signature of $\omega$ on $[s,t]$ is
\begin{equation}
    \log(S(\omega)_{[s,t]}) = \log(1+\mathbf{x}) = \sum_{n=1}^\infty \frac{(-1)^n}{n}\mathbf{x}^{\otimes n} \in \mathfrak{L}((\mathbb{R}^d_{\omega})),
\end{equation}
where $\mathbf{x}=(0,S^\mathbf{1}(\omega)_{[s,t]},S^\mathbf{2}(\omega)_{[s,t]},\ldots)$, $S^\mathbf{j}(\omega)_{[s,t]}$ is the collection of all signature components with words of length $j$, and $\mathfrak{L}((\mathbb{R}^d_{\omega}))$ is the free Lie algebra generated by $\mathbb{R}^d_{\omega}$ \cite{chen1957integration, reutenauer1993free}. From now, we consider the log-signature truncated at level $N$, which lives in the truncated free Lie algebra $\mathfrak{L}^N(\mathbb{R}^d_{\omega})$. A basis for the truncated free Lie algebra is the Hall basis, which consists of up to the $(N-1)^{\text{th}}$ iterated Lie brackets of the basis of $\mathbb{R}^d_{\omega}$, denoted $\{e_k\}_{k=1}^{d_\omega}$, where the product is the tensor product \cite{Hall1950ABF}. Let $\{\hat{e}_k\}_{k=1}^{\beta(d_\omega, N)}$ denote the Hall basis of the truncated free Lie algebra, where $\beta(d_\omega, N)$ is the dimension of the truncated log-signature, and let $\lambda_k$ be the corresponding components of the log-signature.

Since $\mathfrak{L}^N(\mathbb{R}^d_{\omega})$ is the truncated free Lie algebra, the linear map from the increments of the control to the linear ODE in \eqref{eq:app_lncde} defined by 
\begin{equation}
    \sum_{i=1}^{d_{\omega}} A^i_{\theta}\text{d} \omega^{i}_s
\end{equation}
extends to a Lie algebra homomorphism acting on the log-signature of $\omega$ defined by
\begin{equation}
    \sum_{i=1}^{\beta(d_\omega, N)} \bar{A}^k_{\theta}\lambda_k,
\end{equation}
with
\begin{equation}
    \bar{A}^k_{\theta} = A^k_{\theta}
\end{equation}
for $1\leq k\leq d_{\omega}$ and 
\begin{equation}
    \bar{A}^k_{\theta} = \bar{A}^j_{\theta}\bar{A}^i_{\theta} - \bar{A}^i_{\theta}\bar{A}^j_{\theta}
\end{equation}
when the basis element $\hat{e}_k$ corresponds to the lie bracket of $\hat{e}_i$ and $\hat{e}_j$ \cite{reutenauer1993free}. The Lie bracket of $\bar{A}^i_{\theta}$ and $\bar{A}^j_{\theta}$ carries this sign as we are considering them as vector fields on $\mathbb{R}^d$. Similarly to a Log-NCDE, the Log-ODE method is applied to \eqref{eq:app_lncde} over intervals $t_0=r_0<\ldots<r_m=t_n$ with $m<n$, such that
\begin{equation}
    \mathrm{d}\tilde{h}_s = \left(\sum_{k=1}^{\beta(d_{\omega}, N)} \bar{A}^k_{\theta}\lambda^l_k\right)\tilde{h}_s,
\end{equation}
for $s\in[r_l,r_{l+1}]$, where
\begin{equation}
    \frac{\log(S^N(\omega)_{[r_l,r_{l+1}]})}{r_{l+1} - r_l} = \sum_{k=1}^{\beta(d_{\omega}, N)} \lambda^l_k\hat{e}_k.
\end{equation}
The approximate solution is given by
\begin{equation}
\label{eq:app_lin_cde_logode_1}
    \tilde{h}_{r_{l+1}} = \exp\left(\sum_{k=1}^{\beta(d_{\omega}, N)} \bar{A}^k_{\theta}\lambda^l_k\right)\tilde{h}_{r_l},
\end{equation}
and applying an associative scan has a reduced I/O cost as only $m<n$ state-transition matrices need to be materialised in GPU memory, as discussed in Section~\ref{sec:parallel}.

When applying the Log-ODE method to an LNCDE, the Lie brackets are calculated using the products of the linear vector fields for each channel, as opposed to the forward-mode auto-differentiated Jacobian-vector products of the vector field for Log-NCDEs. This significantly reduces the computational cost. For SLiCEs, the Log-ODE method is most beneficial when the linear ODE produced has the same vector field structure as the original vector fields. This is true for both diagonal and block-diagonal SLiCEs. 

\section{Additional experimental details and results}\label{app:exp}

Single runs for all experiments can be completed on a 24GB NVIDIA RTX 4090 GPU in less than $24$ hours. We use the following publicly available datasets, libraries, and baseline models:

\begin{itemize}
    \item \textbf{$A_5$ Benchmark}~\cite{merrill2024illusion}.
    License: MIT. \\
    URL: \url{https://github.com/jopetty/word-problem}

    \item \textbf{Formal Language Benchmark}~\cite{deletang2023neural}.  
    License: CC-BY-4.0. \\ 
    URL: \url{https://arxiv.org/abs/2207.02098}
    
    \item \textbf{UEA Multivariate Time Series Classification Archive}~\cite{bagnall2018ueamultivariatetimeseries}.  
    License: GPL-3.0. \\
    URL: \url{https://www.timeseriesclassification.com/}, \url{https://github.com/time-series-machine-learning/tsml-repo}

    \item \textbf{S4D}~\cite{Gu2022On}.
    License: Apache 2.0. \\  
    URL: \url{https://github.com/state-spaces/s4}
    
    \item \textbf{Mamba}~\cite{gu2023mamba}.  
    License: Apache 2.0. \\  
    URL: \url{https://github.com/state-spaces/mamba}
    
    \item \textbf{DeltaNet, Gated DeltaNet, DeltaProduct}~\cite{schlag2021linear, yang2024parallelizing, yang2025gateddeltanetworksimproving, siems2025deltaproductimprovingstatetrackinglinear}.  
    License: MIT. \\
    URL: \url{https://github.com/fla-org/flash-linear-attention}
    
    \item \textbf{xLSTM / mLSTM / sLSTM}~\cite{beck2024xlstm}.  
    License: Apache 2.0. \\
    URL: \url{https://github.com/NX-AI/xlstm}
    
    \item \textbf{RWKV-7}~\cite{peng2025rwkv7gooseexpressivedynamic}.  
    License: Apache 2.0. \\
    URL: \url{https://github.com/BlinkDL/RWKV-LM}

    \item \textbf{Log-NCDE}~\cite{Walker2024LogNCDE}. License: CC-BY-4.0 \\
    URL: \url{https://github.com/Benjamin-Walker/log-neural-cdes}

    \item \textbf{JAX}~\cite{jax2018github}.  
    License: Apache 2.0. \\
    URL: \url{https://github.com/google/jax}
    
    \item \textbf{PyTorch}~\cite{paszke2019pytorchimperativestylehighperformance}.  
    License: BSD Style, see here \url{https://github.com/pytorch/pytorch?tab=License-1-ov-file} \\
    URL: \url{https://github.com/pytorch/pytorch}
\end{itemize}

\subsection{The $A_5$ benchmark}
\label{app:A5}

The $A_5$ benchmark examines a model's state-tracking ability by asking the model to compose a sequence of even permutations on five elements~\cite{merrill2024illusion}. There are $60$ elements in the group, and the task is to compose between $3$ and $20$ elements. Our experiments follow the approach of \citet{merrill2024illusion}. Models are trained using a token-tagging loss for $100{,}000$ steps with a batch size of $256$. For all sequence lengths, a small batch of sequences of length $2$ are included at each training step to aid convergence. All models use Adam~\cite{kingma2017adam} with weight decay as their optimiser, and linear warm-up followed by cosine annealing with a minimum learning rate of $10^{-5}$ and a maximum learning rate of $10^{-3}$. Additionally, all models use dropout~\cite{srivastava2014dropout} at a rate of $0.1$ and a trainable embedding layer.

The baseline models (Mamba, LSTM, mLSTM, sLSTM, and DeltaProduct) all use a hidden dimension of $1024$. LSTM uses direct stacking whereas the other baseline models use stacked blocks consisting of a sequence model, a GLU layer~\cite{dauphin2017language}, and layer normalisation~\cite{ba2016layer}. DeltaProduct uses both gating and negative eigenvalues~\cite{yang2025gateddeltanetworksimproving, grazzi2024unlockingstatetrackinglinearrnns}. All of the SLiCEs use 
\begin{equation}
    \frac{\omega^{X}_{t_{j+1}}-\omega^{X}_{t_{j}}}{t_{j+1}-t_j} = (1, X_{t_j}) 
\end{equation}
and $1024$ non-zero parameters for each $A^i_{\theta}$. For the diagonal and Walsh--Hadamard SLiCEs, this corresponds to a hidden dimension of $1024$, and for the dense SLiCE, this corresponds to a hidden dimension of $32$. The DPLR SLiCE uses a rank of $2$, giving a hidden dimension of $205$, the block-diagonal SLiCE uses $b_i=4$, giving a hidden dimension of $256$, and the diagonal-dense SLiCE uses a dense block size of $23$, giving a hidden dimension of $518$. The sparse SLiCE uses a hidden dimension of $128$ and a sparsity of $\epsilon=\frac{3}{7}$. All SLiCEs use stacked blocks consisting of a SLiCE, a linear layer followed by a tanh activation function, layer normalisation, and weight regularisation. Due to issues with convergence, Walsh--Hadamard's diagonal matrix $D^i_{\theta}$ is parametrised to take values between $-1$ and $1$.

\subsection{Regular language tasks}
\label{app:fla}

\citet{deletang2023neural} introduced four regular language tasks as part of the formal language benchmark: 
\begin{enumerate}
    \item \textbf{Cycle navigation.} Infer the final position of a walk on a cycle starting at the origin. Actions are randomly sampled from ``go forward one step'', ``stay in the same place'', and ``go backward one step''. We use a cycle of length $5$, therefore a random guesser will achieve an accuracy of $20\%$.
    \item \textbf{Even pairs.} There are two states in the system and the goal is to determine if there is an equal (``even'') number of transitions between the two states. As each sequence is either ``even'' or not, a random guesser will achieve an accuracy of $50\%$.
    \item \textbf{Modular arithmetic (no brackets).} Performs modular arithmetic without any brackets, i.e. only handling addition and multiplication. We use mod $5$ and hence a random guesser will achieve $20\%$.
    \item \textbf{Parity.} There are two elements in the system. To determine the parity, the number of the second element is counted to determine if it is even or odd. This can be viewed as modular summation with mod $2$. A random guesser achieves $50\%$.
\end{enumerate}

This benchmark tests the ability of models to length generalise on state-tracking tasks, by training models on sequences from length $3$ to $40$ and evaluating models on sequences from length $40$ to $256$. Following \citet{beck2024xlstm}, all baseline models use two stacked layers. In addition to the hidden dimension of $512$ used by \citet{beck2024xlstm}, we also train the baseline models with a hidden dimension of $128$, selecting the value that yields the highest average validation accuracy for each model on each task. For Mamba, which does not support a hidden dimension of $128$, we instead choose between $256$ and $512$ based on validation performance. All models are trained using a token-tagging loss for $100{,}000$ steps with a batch size of $256$. All models use Adam~\cite{kingma2017adam} with weight decay as their optimiser, and linear warm-up followed by cosine annealing with a minimum learning rate of $10^{-5}$ and a maximum learning rate of $2\times10^{-3}$. Additionally, all models use dropout~\cite{srivastava2014dropout} with a rate of $0.01$ and a trainable embedding layer. 

Similarly to the $A_5$ benchmark, LSTM uses direct stacking whereas the other baseline models use stacked blocks consisting of a sequence model, a GLU layer~\cite{dauphin2017language}, and layer normalisation~\cite{ba2016layer}. The baseline models considered are vanilla DeltaNet~\cite{schlag2021linear, yang2024gatedlinearattentiontransformers}, DeltaNet with negative eigenvalues (DeltaNet[-1,1])~\cite{grazzi2024unlockingstatetrackinglinearrnns}, Gated DeltaNet~\cite{yang2025gateddeltanetworksimproving}, Gated DeltaProduct with negative eigenvalues and a rank of $2$, sLSTM~\cite{beck2024xlstm}, mLSTM~\cite{beck2024xlstm}, xLSTM~\cite{beck2024xlstm}, RWKV-7~\cite{peng2025rwkv7gooseexpressivedynamic}, a Transformer~\cite{vaswani2017attention}, S4D\cite{Gu2022On}, and Mamba~\cite{gu2023mamba}.

We consider all SLiCEs on this benchmark except sparse due to the lack of an efficient implementation. All SLiCEs use 
\begin{equation}
    \frac{\omega^{X}_{t_{j+1}}-\omega^{X}_{t_{j}}}{t_{j+1}-t_j} = (1, X_{t_j}), 
\end{equation}
and two stacked blocks consisting of the sequence layer, a linear layer followed by a tanh activation function, and layer normalisation. For the diagonal and Walsh--Hadamard SLiCE, we consider hidden dimensions of $128$ and $512$, corresponding to $128$ and $512$ non-zero parameters per state-transition matrix, respectively. For all other SLiCEs, the number of non-zero parameters in the state-transition matrix is fixed at $512$. For DPLR we consider ranks of $r=1,2,4,8$, and for block-diagonal we consider two variants, $b_i=b$ for all $i$ with $b=2,4,8,16$, and $b_i=1$ for $i=1,\ldots,k-1$, and then a final dense block $b_k=b$ for $b=2,4,8,16$, referred to as diagonal-dense SLiCE (D-DE-SLiCE). Due to issues with convergence, Walsh--Hadamard's diagonal matrix $D^i_{\theta}$ is parametrised to take values between $-1$ and $1$. 

Table~\ref{tab:app_fl_regular_only} reports the average validation accuracy for all SLiCEs. Although replacing the diagonal SLiCE with a Walsh–Hadamard SLiCE improves performance on the $A_5$ benchmark, it decreases the average validation accuracy on regular language tasks. We hypothesise that this degradation arises from the same factors responsible for the Walsh–Hadamard SLiCE’s poor performance on the $A_5$ length generalisation task. For a block-diagonal SLiCE with a fixed block size, any $b_i > 1$ leads to higher average validation accuracy than the diagonal SLiCE, with performance peaking at $b_i = 4$. This provides empirical evidence that, under a fixed computational budget (or a fixed number of non-zero state-transition parameters), there exists a trade-off between expressivity and hidden dimension. DPLR-SLiCE exhibits a similar pattern, achieving its highest accuracy at $r = 4$. Using a diagonal-dense SLiCE enables larger hidden dimensions while maintaining dense block connections. The largest dense block configuration, $b = 16$, attains the joint highest average accuracy along with the $r = 4$ DPLR-SLiCE. The strong performance of DPLR-SLiCE is consistent with that of Gated DeltaProduct model with negative eigenvalues, which serves as the best-performing baseline.

\begin{table}
\caption{\textbf{Results of SLiCEs on formal language tasks.} Average and standard deviation of validation accuracy over five runs on the regular language tasks for SLiCEs with diagonal, Walsh--Hadamard (WH), block-diagonal (BD), diagonal-dense (D-DE), and DPLR structures.}
\label{tab:app_fl_regular_only}
\centering
\small
\begin{tabular}{lccccc}
\toprule
\multicolumn{1}{c|}{\textbf{Model}} & \textbf{Cycle Nav.} & \textbf{Even Pairs} & \makecell{\textbf{Mod Arith.} \\ \textbf{No Brack.}} & \multicolumn{1}{c}{\textbf{Parity}} & \multicolumn{1}{|c}{\textbf{Average}} \\
\midrule
Diagonal$_{d_h=128}$ & $69.5 \pm 6.3$ & $100.0 \pm 0.0$ & $20.8 \pm 0.2$ & $89.12 \pm 18.6$ & $69.9$ \\
Diagonal$_{d_h=512}$ & $59.7 \pm 5.1$ & $100.0 \pm 0.0$ & $20.9 \pm 0.1$ & $100.0 \pm0.0$ & $70.2$ \\
WH$_{d_h=128}$ & $69.7 \pm 8.8$ & $93.1 \pm 13.9$ & $20.5 \pm 0.3$ & $50.7 \pm 0.3$ & $58.5$ \\
WH$_{d_h=512}$ & $35.5 \pm 1.8$ & $58.5 \pm 2.5$ & $23.8 \pm 1.1$ & $71.4 \pm 12.9$ & $47.3$ \\
BD$_{d_h=256,\;b=2}$ & $92.5 \pm 14.0$ & $72.7 \pm 3.0$ & $37.7 \pm 2.6$ & $99.6 \pm 0.8$ & $75.6$ \\
BD$_{d_h=128,\;b=4}$ & $99.8 \pm 0.2$ & $85.9 \pm 11.3$ & $54.0 \pm 12.5$ & $95.3 \pm 3.9$ & $83.8$ \\
BD$_{d_h=64,\;b=8}$ & $99.9 \pm 0.1$ & $91.3 \pm 6.3$ & $70.6 \pm 21.4$ & $54.1 \pm 4.9$ & $79.0$ \\
BD$_{d_h=32,\;b=16}$ & $97.6 \pm 3.5$ & $94.7 \pm 6.5$ & $76.6 \pm 22.1$ & $50.8 \pm 0.3$ & $79.9$ \\
D\textendash DE$_{d_h=510,\;b=2}$ & $61.9 \pm 20.4$ & $91.3 \pm 11.5$ & $20.8 \pm 0.2$ & $97.8 \pm 2.9$ & $67.9$ \\
D\textendash DE$_{d_h=500,\;b=4}$ & $81.6 \pm 15.0$ & $85.3 \pm 18.0$ & $29.4 \pm 15.4$ & $83.7 \pm 9.4$ & $70.0$ \\
D\textendash DE$_{d_h=456,\;b=8}$ & $90.6 \pm 9.4$ & $90.7 \pm 3.0$ & $31.0 \pm 4.2$ & $79.9 \pm 3.5$ & $73.1$ \\
D\textendash DE$_{d_h=272\;b=16}$ & $73.3 \pm 29.4$ & $84.8 \pm 8.5$ & $98.4 \pm 0.7$ & $83.8 \pm 11.3$ & $85.1$ \\
DPLR$_{d_h=171,r=1}$ & $46.5 \pm 26.3$ & $91.1 \pm 4.4$ & $25.8 \pm 10.2$ & $87.8 \pm 9.5$ & $62.8$ \\
DPLR$_{d_h=102,r=2}$ & $53.1 \pm 14.7$ & $96.8 \pm 5.1$ & $43.9 \pm 9.0$ & $79.7 \pm 14.4$ & $68.4$ \\
DPLR$_{d_h=57,r=4}$ & $81.1 \pm 16.6$ & $100.0 \pm 0.0$ & $68.3 \pm 19.3$ & $91.0 \pm 18.0$ & $85.1$ \\
DPLR$_{d_h=30,r=8}$ & $90.1 \pm 10.2$ & $100.0 \pm 0.0$ & $60.3 \pm 19.7$ & $50.7 \pm 0.3$ & $75.3$ \\
\midrule
\textbf{Random} & $20.0$ & $50.0$ & $20.0$ & $50.0$ & $35.0$ \\
\bottomrule
\end{tabular}
\end{table}

\subsection{UEA multivariate time series classification archive}
\label{app:uea}
The experiments on the UEA multivariate time series classification archive follow the approach of \citet{Walker2024LogNCDE}, using the same data splits. To mitigate convergence issues, the time series were scaled to the range $[-1,1]$. The log-signatures were scaled down by a factor of ten on Heartbeat for all structures except DPLR, and by a factor of one hundred on Heartbeat and MotorImagery for DPLR. All SLiCE models use the same hyperparameters as the Log-NCDE, differing only in that the non-linear vector field is replaced by their respective structured linear vector fields. The block-diagonal structure uses $b_i=4$, the diagonal-dense structure uses a dense block of dimension $16$, the DPLR structure uses a rank of $4$, and the sparse structure uses a sparsity of $0.1$. The Log-ODE method is applied over the same intervals as in the Log-NCDE. The flows are composed using an associative parallel scan applied to chunks of size $128$, with each chunk processed recurrently. All SLiCEs take $\omega^X_s = X_s$.

\begin{table}
\caption{\textbf{UEA test accuracy (\%) for all models across six datasets.} The best-performing model in each column is highlighted in bold, and the second-best is underlined. Lower is better for Average Rank.}
\label{tab:app_uea_full_results_rounded}
\centering
\footnotesize
\begin{tabular}{lcccccccc}
\toprule
\textbf{Model} & \textbf{EW} & \textbf{EC} & \textbf{HB} & \textbf{MI} & \textbf{SCP1} & \textbf{SCP2} & \textbf{Av. Acc} & \textbf{Av. Rank} \\
\midrule
BD-SLiCE & $86 \pm 4$ & $29 \pm 7$ & $77 \pm 6$ & $53 \pm 3$ & $\underline{85 \pm 2}$ & $\mathbf{54 \pm 8}$ & $\underline{64.0}$ & $\mathbf{3.2}$ \\
Log-NCDE & $86 \pm 6$ & $\mathbf{34 \pm 7}$ & $75 \pm 5$ & $54 \pm 6$ & $83 \pm 3$ & $\underline{54 \pm 5}$ & $\mathbf{64.3}$ & $\underline{4.0}$ \\
D-DE-SLiCE & $86 \pm 6$ & $27 \pm 7$ & $74 \pm 4$ & $\mathbf{55 \pm 4}$ & $85 \pm 4$ & $52 \pm 5$ & $63.0$ & $5.7$ \\
WH-SLiCE & $85 \pm 6$ & $30 \pm 5$ & $76 \pm 6$ & $49 \pm 7$ & $82 \pm 3$ & $52 \pm 6$ & $62.5$ & $6.7$ \\
DPLR-SLiCE & $84 \pm 6$ & $28 \pm 5$ & $74 \pm 6$ & $52 \pm 6$ & $84 \pm 3$ & $51 \pm 6$ & $62.0$ & $7.0$ \\
D-SLiCE & $79 \pm 6$ & $27 \pm 5$ & $73 \pm 6$ & $\underline{54 \pm 7}$ & $84 \pm 3$ & $53 \pm 6$ & $61.7$ & $7.2$ \\
LRU & $\underline{88 \pm 3}$ & $22 \pm 3$ & $\mathbf{78 \pm 7}$ & $48 \pm 6$ & $83 \pm 4$ & $51 \pm 4$ & $61.7$ & $7.3$ \\
S6 & $85 \pm 17$ & $26 \pm 7$ & $77 \pm 9$ & $51 \pm 5$ & $83 \pm 3$ & $50 \pm 10$ & $62.0$ & $7.7$ \\
S5 & $81 \pm 4$ & $24 \pm 5$ & $\underline{78 \pm 6}$ & $48 \pm 6$ & $\mathbf{90 \pm 5}$ & $51 \pm 3$ & $61.8$ & $8.0$ \\
S-SLiCE & $\underline{88 \pm 5}$ & $\underline{30 \pm 7}$ & $73 \pm 6$ & $48 \pm 3$ & $83 \pm 2$ & $49 \pm 4$ & $61.8$ & $8.2$ \\
DE-LNCDE & $\mathbf{88 \pm 4}$ & $26 \pm 5$ & $74 \pm 5$ & $50 \pm 6$ & $82 \pm 4$ & $50 \pm 3$ & $61.6$ & $8.3$ \\
NCDE & $75 \pm 4$ & $30 \pm 7$ & $74 \pm 3$ & $50 \pm 3$ & $80 \pm 6$ & $53 \pm 3$ & $60.2$ & $8.8$ \\
NRDE & $84 \pm 8$ & $25 \pm 2$ & $73 \pm 5$ & $47 \pm 6$ & $81 \pm 3$ & $\underline{54 \pm 7}$ & $60.6$ & $10.3$ \\
Mamba & $71 \pm 16$ & $28 \pm 5$ & $76 \pm 4$ & $48 \pm 5$ & $81 \pm 2$ & $48 \pm 4$ & $58.6$ & $10.8$ \\
\bottomrule
\end{tabular}
\end{table}

Table~\ref{tab:app_uea_full_results_rounded} presents a breakdown of the average test accuracies from Table~\ref{tab:uea_full_results_rounded} across individual datasets. Figure~\ref{fig:time-vs-acc} provides a visual representation of the performance of the baseline models, dense LNCDE, diagonal SLiCE, and block-diagonal SLiCE on the UEA-MTSCA benchmark. Although Log-NCDEs improve the average time per training step compared with NCDEs and NRDEs, a substantial gap remains relative to S5, S6, LRU, and Mamba. Replacing the non-linear vector field with a block-diagonal linear vector field reduces the average time per training step by a factor of $20$, bringing it within the same order of magnitude as the SSM baselines without degrading the average test accuracy. Using a diagonal linear vector field further decreases the training time to below that of the SSM baselines while maintaining comparable accuracy to the baselines. However, the average test accuracy is lower than that of the block-diagonal variant, consistent with its reduced expressivity. Employing a dense linear vector field slightly increases run-time and substantially raises GPU memory usage compared with the block-diagonal case. It also reduces the average test accuracy, which may indicate over-fitting due to the large number of parameters per state transition. This result suggests that using a block-diagonal linear vector field may have benefits beyond simply reducing the computational cost of the model.

\begin{figure}
    \centering
    \includegraphics[width=\linewidth]{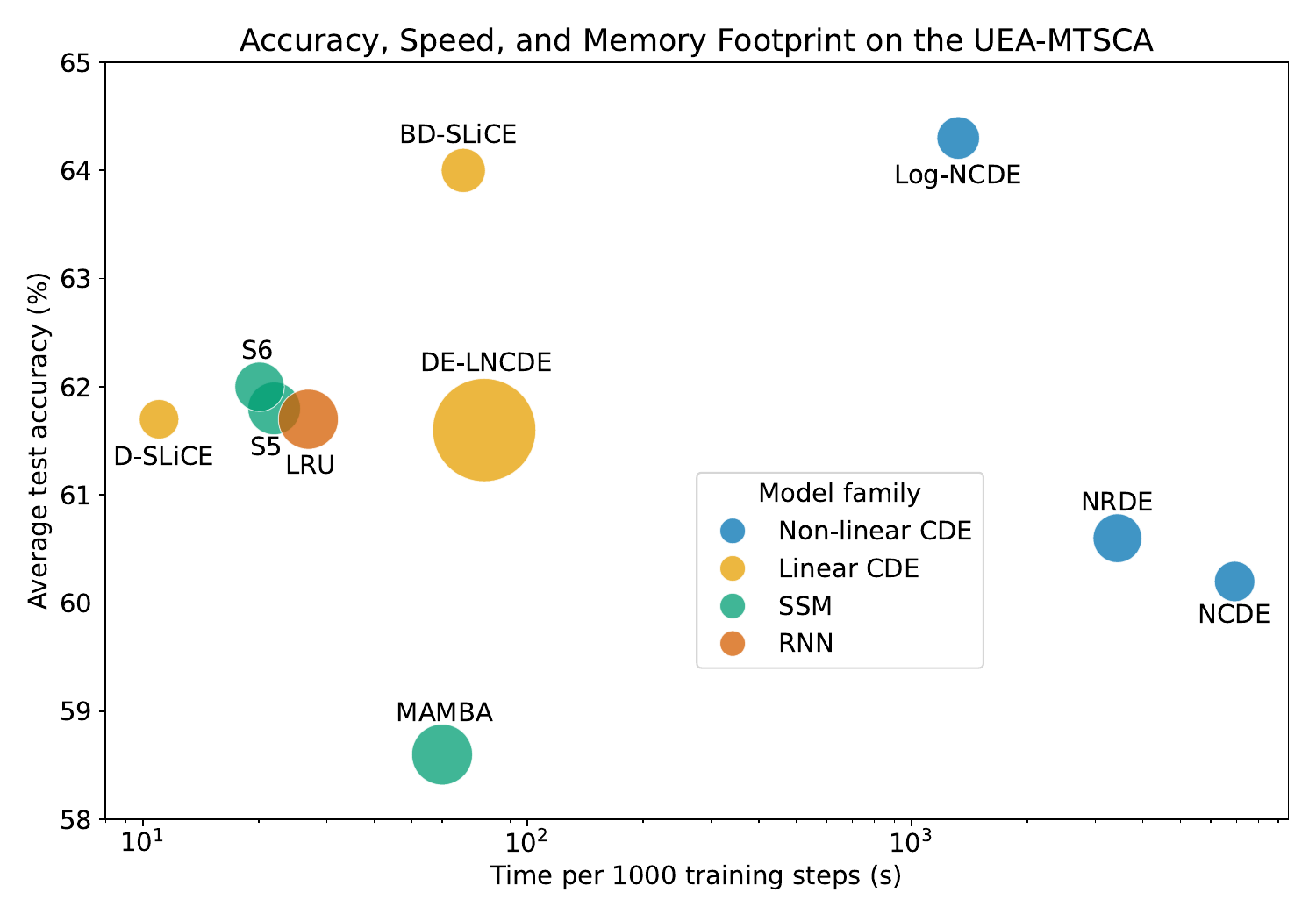}
    \caption{\textbf{Average per-step training time versus average validation accuracy across six multivariate time-series classification datasets from the UEA-MTSCA.} Each point represents a model, with circle area proportional to average GPU memory usage. We compare four families of models: a recurrent neural network (LRU), SSMs (S5, S6, and Mamba), non-linear NCDEs (NCDE, NRDE, and Log-NCDE), and linear NCDEs (Diagonal SLiCE, Block-Diagonal SLiCE, and Dense LNCDE). All test accuracy results except linear NCDEs are from \citet{Walker2024LogNCDE}. All timing and GPU memory results were re-performed on an NVIDIA H100 GPU.}
    \label{fig:time-vs-acc}
\end{figure}

To conclude, we evaluate how the Log-ODE method and parallel associative scan influence time per training step and GPU memory for the diagonal SLiCE, block-diagonal SLiCE, and dense LNCDE. The EigenWorms dataset is chosen for this comparison, as it contains approximately $18{,}000$ observations per time series. Table~\ref{tab:time_par_assoc_scan} summarises the effect of applying a parallel associative scan with varying chunk sizes on time per $1{,}000$ training steps without the Log-ODE method. For all three models, increasing the number of parallel steps yields strong reductions in time per training step. The impact of the high I/O costs associated with an associative scan is evident from the diminishing benefit of a small number of parallel steps as you move from diagonal, through block-diagonal, to dense matrices.

\begin{table}
\caption{\textbf{Training time for EigenWorms using a parallel associative scan without applying Log-ODE method.} Time per $1{,}000$ training steps (s) for diagonal SLiCE, block-diagonal SLiCE, and dense LNCDE on EigenWorms when a parallel associative scan is applied with various chunk sizes. Experiments were performed on an NVIDIA H100, and the batch size is $1$.}
\label{tab:time_par_assoc_scan}
\centering
\begin{tabular}{c|ccc}
\toprule
Parallel Steps & D-SLiCE & BD-SLiCE & DE-LNCDE \\
\midrule
None & $311.0$ & $374.89$ & $444.68$ \\
$4$ & $134.2$ & $326.75$ & $439.51$ \\
$16$ & $57.50$ & $161.74$ & $257.56$ \\
$64$ & $31.01$ & $68.59$ & $126.01$ \\
$256$ & $21.10$ & $30.53$ & $71.54$ \\
\bottomrule
\end{tabular}
\end{table}

Table~\ref{tab:time_assoc_scan_log_ode} compares the time per $1{,}000$ training steps and GPU memory for diagonal SLiCE, block-diagonal SLiCE, and dense LNCDE on EigenWorms when using a parallel associative scan and the Log-ODE method. As expected, both GPU memory and run-time increase monotonically for every combination of associative scan and Log-ODE method as the model structure transitions from diagonal, through block-diagonal, to dense matrices. The consistent $2.69$ GB floor across several configurations likely reflects peak memory usage from fixed operations outside the recurrence. Without the Log-ODE method, the dense LNCDE exhibits high GPU memory consumption, which constrained experiments to a batch size of $4$. When both the Log-ODE method and a parallel associative scan are applied, the diagonal and block-diagonal SLiCE achieve comparable times per training step, indicating that the recurrence contributes less to overall computation under these parameter settings. Overall, combining the Log-ODE method with a parallel associative scan reduces the time per training step by $\sim30\times$ for the diagonal and block-diagonal SLiCE without affecting GPU memory, and by $\sim16\times$ for the dense LNCDE while lowering GPU memory usage by over $5\times$.

\begin{table}
\caption{\textbf{Comparison of training time and GPU memory for EigenWorms using a parallel associative scan and applying the Log-ODE method.} Experiments were performed on an NVIDIA H100 with a batch size of $4$ for diagonal SLiCE, block-diagonal SLiCE, and dense LNCDE.}
\label{tab:time_assoc_scan_log_ode}
\centering
\begin{tabular}{cccrr}
\toprule
\multirow{2}{*}{Model} &
\multirow{2}{*}{Metric} &
\multirow{2}{*}{\makecell{Log-ODE \\ Interval}} &
\multicolumn{2}{c}{Parallel Steps} \\ \cmidrule(l){4-5}
 & & & None & $128$ \\
\midrule
\multirow{4}{*}{D-SLiCE}
& \multirow{2}{*}{Time / $1$k steps [s]} & $1$ & $317.49$ & $23.51$ \\
& & $12$ & $33.03$ & $10.96$ \\[4pt]
& \multirow{2}{*}{GPU Memory [GB]} & $1$ & $2.69$ & $2.69$ \\
& & $12$ & $2.69$ & $2.69$ \\
\midrule
\multirow{4}{*}{BD-SLiCE}
& \multirow{2}{*}{Time / $1$k steps [s]} & $1$ & $378.20$ & $48.74$ \\
& & $12$ & $46.98$ & $13.24$ \\[4pt]
& \multirow{2}{*}{GPU Memory [GB]} & $1$ & $2.69$ & $4.73$ \\
& & $12$ & $2.69$ & $2.69$ \\
\midrule
\multirow{4}{*}{DE-LNCDE}
& \multirow{2}{*}{Time / $1$k steps [s]} & $1$ & $465.94$ & $167.86$ \\
& & $12$ & $47.95$ & $29.37$ \\[4pt]
& \multirow{2}{*}{GPU Memory [GB]} & $1$ & $35.45$ & $51.84$ \\
& & $12$ & $2.69$ & $6.79$ \\
\bottomrule
\end{tabular}
\end{table}

\end{document}